\newcommand{\cmark}{\ding{51}}%
\newcommand{\xmark}{\ding{55}}%
\crefname{section}{\S}{\S\S}
\Crefname{section}{\S}{\S\S}
\crefname{figure}{Fig.}{Fig.}
\crefname{alg}{Alg.}{Alg.}
\crefname{line}{line}{lines}
\crefname{appendix}{App.}{}
\crefname{equation}{eq.}{eqs.}
\crefname{table}{Table}{Tables}
\crefname{prop}{Proposition}{Propositions}
\newcommand{\defeq}[0]{\mathrel{\stackrel{\textnormal{\tiny def}}{=}}}
\newtheorem{theorem}{Theorem}
\newtheorem{proposition}{Proposition}
\newcommand{\word}{w}
\newcommand{\words}{\boldsymbol{w}}
\newcommand{\Word}{W}
\newcommand{\Words}{\boldsymbol{W}}
\newcommand{\prevwords}{\words_{< t}}
\newcommand{\surpfunc}{h}
\newcommand{\entfunc}{\mathrm{H}}
\newcommand{\surp}{\surpfunc_t(\word_t)}
\newcommand{\surpprev}{\surpfunc_{t\!-\!1}(\word_{t\!-\!1})}
\newcommand{\ent}{\entfunc(W_t)}
\newcommand{\entprev}{\entfunc(W_{t-1})}
\newcommand{\entprevprev}{\entfunc(W_{t-2})}
\newcommand{\entprevprevprev}{\entfunc(W_{t-3})}
\newcommand{\deltallh}{\Delta_{\mathrm{llh}}}
\newcommand{\vocabnoeos}{\mathcal{W}}
\newcommand{\vocab}{\overline{\vocabnoeos}}
\newcommand{\renyientfunc}{\entfunc_{\alpha}}
\newcommand{\renyient}{\renyientfunc\!\left(W_{t}\right)}
\newcommand{\renyientnext}{\renyientfunc(W_{t+1})}
\newcommand{\timefunc}{y}
\newcommand{\skipratio}{y}
\newcommand{\ReadingMeasure}{y}
\newcommand{\bx}{\mathbf{x}}
\newcommand{\gamparams}{\boldsymbol{\phi}}
\newcommand{\predfunc}{f_{\gamparams}}
\newcommand{\unigramfunc}{u}
\newcommand{\unigram}{\unigramfunc(\word_t)}
\newcommand{\bxbase}{\bx^{\mathrm{base}}}
\newcommand{\btheta}{\boldsymbol{\theta}}
\newcommand{\ptheta}{p_{\btheta}}
\newcommand{\psupport}{\mathrm{supp}(p)}
\newcommand{\eos}{\ensuremath{\textsc{eos}}\xspace}
\newcommand{\enti}{\entfunc(\Word_{t'})}
\newcommand{\renyienti}{\renyientfunc(\Word_{t'})}
\newcommand{\R}{\mathbb{R}}
\DeclareMathOperator*{\expect}{\mathbb{E}}
\newcommand{\defn}[1]{\textbf{#1}}
\newcommand{\citeposs}[1]{\citeauthor{#1}'s (\citeyear{#1})}
\newcommand{\renyi}{R\'enyi\xspace}
\newcommand{\llh}{\mathrm{llh}}
\newcommand{\bxmodel}{\bx^{\mathrm{model}}}
\newcommand{\dataset}{\mathcal{D}}
\newcommand{\gpt}{GPT-2\xspace}
\newcommand{\gptsmall}{GPT-2 \texttt{small}\xspace}
\newcommand{\bxcommon}{\bx^{\mathrm{cmn}}}
\newcommand{\bxsurp}{\bx^{\mathrm{surp}}}
\newcommand{\bxsurpnott}{\bx^{\mathrm{surp} \neq t}}
\newcommand{\bxsurpnoti}{\bx^{\mathrm{surp} \neq t'}}
\definecolor{mygray}{rgb}{.3, .3, .3}
\definecolor{mygreen}{rgb}{0, .5, 0}
\definecolor{myred}{rgb}{.6, 0.15, 0.15}
\newcommand{\trimheight}{.4cm}
\title{
On the Effect of Anticipation on Reading Times
}
\newcommand{\ucambridge}{\emoji[emoji]{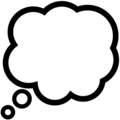}}
\newcommand{\ethz}{\emoji[emoji]{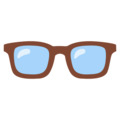}}
\newcommand{\mituni}{\emoji[emoji]{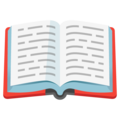}}
\author{
 Tiago Pimentel$^{\ucambridge}$~
 Clara Meister$^{\ethz}$~
 Ethan G. Wilcox$^{\ethz}$~
 \textbf{Roger P.\ Levy}$^{\mituni}$~
 \textbf{Ryan Cotterell}$^{\ethz}$ \\
 $^{\ucambridge}$University of Cambridge~\;~  $^{\ethz}$ETH Zürich~\;~  $^{\mituni}$MIT\\
 \texttt{\href{mailto:tp472@cam.ac.uk}{tp472@cam.ac.uk}}~\;~ 
 \texttt{\href{mailto:clara.meister@inf.ethz.ch}{clara.meister@inf.ethz.ch}}~\;~ 
 \texttt{\href{mailto:ethan.wilcox@inf.ethz.ch}{ethan.wilcox@inf.ethz.ch}} \\ \texttt{\href{mailto:rplevy@mit.edu}{rplevy@mit.edu}}~\;~  \texttt{\href{mailto:ryan.cotterell@inf.ethz.ch}{ryan.cotterell@inf.ethz.ch}}
}
\begin{document}
\maketitle
\begin{abstract}
Over the past two decades, numerous studies have demonstrated how less predictable (i.e., higher surprisal) words take more time to read.
In general, these studies have implicitly assumed the reading process is purely \emph{responsive}: Readers observe a new word and allocate time to process it as required. 
We argue that prior results are also compatible with a reading process that is at least partially \emph{anticipatory}: Readers could make predictions about a future word and allocate time to process it based on their expectation.
In this work, we operationalize this anticipation as a word's contextual entropy.
We assess the effect of anticipation on reading by comparing how well surprisal and contextual entropy predict reading times on four naturalistic reading datasets: two self-paced and two eye-tracking. 
Experimentally, across datasets and analyses, we find substantial evidence for effects of contextual entropy over surprisal on a word's reading time (RT): in fact, entropy is sometimes better than surprisal in predicting a word's RT. 
Spillover effects, however, are generally not captured by entropy, but only by surprisal.
Further, we hypothesize four cognitive mechanisms through which contextual entropy could impact RTs---three of which we are able to design experiments to analyze.
Overall, our results support a view of reading that is not just responsive, but also anticipatory.%
\footnote{Code is available at \url{https://github.com/rycolab/anticipation-on-reading-times}.}\looseness=-1 
\end{abstract}

\vspace{-15pt}
\section{Introduction}

Language comprehension---and, by proxy, the reading process---is assumed to be incremental and dynamic in nature: Readers take in one word at a time, process it, and then move on to the next word \cite{hale2001probabilistic,hale2006uncertainty,rayner2009language,boston2011parallel}.
As each word requires a different amount of processing effort, readers must dynamically allocate differing amounts of time to words as they read.
Indeed, this effect has been confirmed by a number of studies, which show a word's reading time is a monotonically increasing function of the word's length and surprisal \citep[\textit{inter alia}]{hale2001probabilistic,smith2008optimal,shain-2019-large}.\looseness=-1

Most prior work \cite[e.g.,][]{levy2008expectation,DEMBERG2008193,monsalve2012,wilcox2020predictive}, however, focuses on the \defn{responsive} nature of the reading process, i.e., prior work looks solely at how a reader's behavior is influenced by attributes of words which have already been observed.
Such analyses make the assumption that readers dynamically allocate resources to predict future words' identities in advance, but that the distribution of those predictions do not themselves directly affect reading behavior.
However, a closer analysis of RT data shows the above theory might not capture the whole picture.
In addition to being responsive, reading behavior may also be \defn{anticipatory}: 
Readers' predictions may influence reading behavior for a word regardless of its actual identity.\looseness=-1

Theoretically, anticipatory reading behavior may be an adaptive response to oculomotor constraints, as it takes time both to identify a word and to program a motor response to move beyond it.
An example of anticipatory behavior is that the eyes often skip over words while reading---a decision that must be made while the word's identity remains uncertain \cite{ehrlich-rayner:1981,schotter-etal:2012-parafoveal}.
We identify four mechanisms that are anticipatory in nature and may impact reading behaviors:\looseness=-1
\vspace{-3pt}
\begin{enumerate}[label=(\roman*),noitemsep]
\item \textbf{word skipping}: readers may completely omit fixating on a word;\looseness=-1  
\item \textbf{budgeting}: readers may pre-allocate RTs for a word before reaching it; 
\item \textbf{preemptive processing}: readers may start processing a future word based on their expectations (and before knowing its identity);\looseness=-1
\item \textbf{uncertainty cost}:
readers may incur an additional processing load when in high uncertainty contexts.
\end{enumerate}
\vspace{-3pt}

In this work, we look beyond responsiveness, investigating anticipatory reading behaviors and the mechanisms above.
Specifically, we look at how a reader's expectation about a word's surprisal---operationalized as that word's \defn{contextual entropy}---affects the time taken to read it. 
For various reasons, however, a reader's anticipation may not exactly match a word's expected surprisal value, which would make the contextual entropy a poor operationalization of anticipation.
Rather, readers may rely on skewed approximations instead, e.g., anticipating that the next word's surprisal is simply the surprisal of the most likely next word.
We use the \renyi{} entropy (a generalization of Shannon's entropy) to operationalize these different skewed expectation strategies.
We then design several experiments to investigate the mechanisms above, analyzing the relationship between readers' expectations about a word's surprisal and its observed RTs.\looseness=-1

We run our analyses in four naturalistic datasets: two self-paced reading and two eye-tracking.
In line with prior work, we find a significant effect of a word's surprisal on its RTs across all datasets, reaffirming the responsive nature of reading.
In addition, we find the word's contextual entropy to be a significant predictor of its RTs in three of the four analyzed datasets---in fact, in two of these, entropy is a more powerful predictor than  surprisal; see \cref{tab:delta_llh_entropy_renyi}.
Unlike surprisal however, in general, entropy does not predict spillover effects.
We further find that a specific \renyi{} entropy (with $\alpha=\sfrac{1}{2}$) consistently leads to stronger predictors than the Shannon entropy. 
Our finding suggests readers may anticipate a future word's surprisal to be a function of the number of plausible word-level continuations (as opposed to the actual expected surprisal).\looseness=-1

\section{Predicting Reading Behavior} \label{sec:predictive_rt}

One behavior of interest in psycholinguistics is reading time (RT) allocation, i.e., how much time a reader spends processing each word in a text.
RTs and other eye movement measures, such as word skipping (\cref{sec:word-skip_planning}), are important for psycholinguistics because they offer insights into the mechanisms driving the reading process.
Indeed, there exists a vast literature of such analyses
\citep[\textit{inter alia}]{rayner1998eye,hale2001probabilistic,hale2003information,hale2016information,keller-2004-entropy,van-schijndel-linzen-2018-neural,shain-2019-large,shain-2021-cdrnn,shain2021continuous,shain2022deep,wilcox2020predictive,meister-etal-2021-revisiting,meister-etal-2022-analyzing,kuribayashi-etal-2021-lower,kuribayashi-etal-2022-context,hoover2022plausibility}.\footnote{For more comprehensive introductions to computational reading time analyses see \citet{rayner1998eye,rayner2005eye}.}

The standard procedure for  reading behavior analysis is to first choose a set of variables $\bx \in \R^d$ which is believed to impact reading---e.g., we could choose $\bx_t=[|\word_t|, \unigram]^\intercal$,
where $|\word_t|$ is the length of word $\word_t$ and $\unigram$ is its frequency (quantified as its unigram log-probability).
These variables are then used to fit a regressor $\predfunc(\bx)$ of a reading measure $\ReadingMeasure$:\looseness=-1%
\begin{equation} \label{eq:regressor}
    \ReadingMeasure (w_t \mid \prevwords) \sim \predfunc(\bx_t)
\end{equation}
where $\predfunc: \R^d \rightarrow \R$, $\gamparams$ are learned parameters, and $\ReadingMeasure$ will be either reading times or word skipping ratio here.
We then evaluate this regressor by looking at its performance, which is typically operationalized as the average log-likelihood assigned by $\predfunc(\bx)$ to held out data \cite{goodkind-bicknell-2018-predictive,wilcox2020predictive}.

When comparing different theories of the reading process, each may predict a different architecture $\predfunc$ or set of variables $\bx$ which should be used in \cref{eq:regressor}.
We can then compare these theories by looking at the performance of their associated regressors. 
Specifically, we take a model that leads to higher log-likelihoods on held out data as evidence in favor of its corresponding theory about underlying cognitive mechanisms. 
Further, model $\predfunc(\bx)$ can then be used to understand the relationship between the employed predictors $\bx$ and RTs.\looseness=-1

\subsection{Responsive Reading}

One of the most studied variables in the above paradigm is \defn{surprisal}, which measures a word's information content.
Surprisal theory \cite{hale2001probabilistic,levy2008expectation} posits that a word's surprisal should directly impact its processing cost. 
Intuitively, this makes sense: The higher a word's information content, the more resources it should take to process that word. 
Surprisal theory has since sparked a line of research exploring the relationship between surprisal and processing cost, where a word's processing cost is typically quantified as its RT.\footnote{\citet{levy_thesis}, for instance, showed that a word's surprisal quantifies a change in the reader's belief over sentence continuations. He then posited this change in belief may be reflected as processing cost.\looseness=-1}\looseness=-1

Formally, the surprisal (or information content) of a word is defined as its in-context negative log-likelihood \citep{shannon1948mathematical}, which we denote as%
\begin{subequations}\label{eq:surprisal}
\begin{align} 
	\surpfunc_t(\word) &\defeq 
 \entfunc(\Word_t = \word \mid \Words_{<t} = \prevwords)\\
 &= -\log_2 p(\word \mid \words_{< t}) 
\end{align}
\end{subequations}
where $p$ is the ground-truth probability distribution over natural language utterances. 
We resort to $h_t(w)$ as a convenient shorthand that avoids notational clutter.
In words, \cref{eq:surprisal} states that a word is more surprising---and thus conveys more information---if it is less likely, and vice versa.\looseness=-1

Time and again, the surprisal has proven to be a strong predictor in RT analyses \citep[\emph{inter alia}]{smith2008optimal,smith2013-log-reading-time,goodkind-bicknell-2018-predictive,wilcox2020predictive,shain2022large}. 
Importantly, surprisal (as well as other properties of a word, like frequency or length) is a quantity that can only feasibly impact readers' behaviors after they have encountered the word in question.\footnote{This follows from standard theories of causality. \citet{granger1969investigating}, for instance, posits that future material cannot influence present behavior.}
Thus, by limiting their analyses to such characteristics, these prior works assume RT allocation happens \emph{after} word identification, being thus \defn{responsive} to the context a reader finds themselves in and happening on demand as needed for processing a word.\looseness=-1

\subsection{Anticipatory Reading}

Not all reading behaviors, however, can be characterized as reactive.
As a concrete example, readers often skip words---a decision which must be made while the next word's identity is unknown. 
Furthermore, prior work  has shown that the uncertainty over a sentence’s continuations impacts RTs \cite[where this uncertainty is quatified as contextual entropy, as we make explicit later;][]{roark-etal-2009-deriving,angele2015successor,van2017approximations,van-schijndel-linzen-2019-entropy}. Both of these observations offer initial evidence that some form of \defn{anticipatory} planning is being performed by the reader, influencing the way that they read a text.\looseness=-1

The presence of such forms of anticipatory processing suggests that, beyond a word's surprisal, a reader's predictions about a word may influence the time they take to process it.
A word's RT, for instance, could be (at least partly) planned before arriving at it, based on the reader's expectation of the amount of work  necessary for processing that word. 
This expectation has a formal definition, the \defn{contextual entropy}, which is defined as follows:
\begin{subequations}\label{eq:shannon_ent}
\begin{align}
    \entfunc(&W_t \mid \boldsymbol{W}_{<t} = \prevwords) \defeq \expect_{\word \sim p(\cdot \mid \prevwords)}[h_t(\word)] \\
    &=-\! \sum_{\word \in \vocab} p(\word \mid \prevwords) \log_2 p (\word \mid \prevwords)
\end{align}
\end{subequations}
where $W_t$ denotes a $\vocab$-valued random variable, which takes on values $\word \in \vocab$ with distribution $p(\cdot \mid \prevwords)$.
Specifically, we assume a (potentially infinite) vocabulary $\vocabnoeos$, which we augment to include a special $\eos \not\in \vocabnoeos$ token that indicates the end of an utterance.
To that end, we define $\vocab \defeq \vocabnoeos \cup \{\eos\}$.
When clear from context, we shorten $\entfunc(W_t \mid \boldsymbol{W}_{<t} = \prevwords)$ to simply $\entfunc(W_t)$.\looseness=-1

Prior work has also investigated the role of entropy in RTs.
\citet{hale2003information,hale2006uncertainty}, for instance, investigated the role of entropy reduction on reading times; \citeauthor{hale2003information} defines entropy reduction as the change in the conditional entropy over sentence parses  
induced by word $t$, which is a different measure than the word entropy we investigate here.
More recently, other work \citep{roark-etal-2009-deriving,van2017approximations,aurnhammer2019evaluating} investigated the role of successor entropy (i.e., word $(t+1)$'s entropy) on RTs.
\citet{linzen-jaeger-2014-investigating} investigated both entropy reduction, total future entropy (i.e., the conditional entropy over sentence parses), and single step syntactic entropy (i.e.,  the entropy over the next step in a syntactic derivation).
In this work, we are instead interested in the role of the entropy of word $t$ itself because of its theoretical motivation as the expected processing difficulty under surprisal theory.

Similarly to us, \citet{cevoli2022prediction} also studies the role of the entropy of word $t$ on RTs for word $t$; more specifically, \citeauthor{cevoli2022prediction} analyze \textit{prediction error costs} by investigating how surprisal and entropy interact in predicting RTs.
Finally, \citet{smith-levy:2010cogsci} also investigate how word $t$'s contextual entropy influences RTs, but while further conditioning a reader's predictions on a noisy version of word $t$'s visual signal.\looseness=-1

\vspace{-2pt}
\subsection{Skewed Anticipations} \label{sec:skewed_anticipations}

\Cref{eq:shannon_ent} assumes readers will use the expectation to estimate what the surprisal of the anticipated word $\word_t$ will be, while knowing only its context $\prevwords$.
However, a reader may employ a different strategy when making anticipatory predictions.
One possibility, for instance, is that readers could be overly confident, and trust their best (i.e., most likely) guess when making this prediction.
In this case, readers would instead anticipate a subsequent word's surprisal to be:\looseness=-1%
\begin{subequations}\label{eq:argmin_ent}
\begin{align} 
     \entfunc_{\infty}& (\Word_t \mid \Words_{<t} = \prevwords) \defeq \min_{\word \in \vocab} h_t(\word) \\
     &= \min_{\word \in \vocab} - \log_2 p (\word \mid \prevwords)
\end{align}
\end{subequations}
where we use the notation $\mathrm{H}_{\infty}$ to describe this quantity for reasons that will become clear later in this section.
Another possibility is that readers could ignore each word's specific probability value when anticipating future surprisals, focusing instead on the number of competing possible words with non-zero probability:\footnote{While most state-of-the-art language models cannot assign zero probabilities to a word due to their use of a softmax in their final layers it is plausible that humans could.}\looseness=-1%
\begin{equation} \label{eq:argmax_ent}
    \entfunc_0(\Word_t \mid \Words_{<t} = \prevwords) \defeq - \log_2 \frac{1}{|\psupport|}
\end{equation}
where $\psupport \defeq \{\word \in \vocab \mid p(\word \mid \prevwords) > 0\}$.
As the subscript notation suggests, the above anticipatory predictions can be written in a unified framework using the \defn{contextual \renyi{} entropy} \citep{renyi1961measures}, which is defined as
\begin{align} \label{eq:renyi}
    \mathrm{H}_\alpha(\Word_t &\mid \boldsymbol{W}_{<t} = \prevwords) \\
    &\defeq \lim_{\beta \rightarrow \alpha} \frac{1}{1 - \beta} \log_2 \!\sum_{\word \in \vocab}\! \bigg( p(\word \mid \prevwords) \bigg)^{\beta} \nonumber
\end{align}
Again, we will shorten  $\mathrm{H}_\alpha(\Word_t \mid \boldsymbol{W}_{<t} = \prevwords)$  to 
$\mathrm{H}_\alpha(\Word_t)$.
Notably, with different values of $\alpha$, the \renyi{} entropy leads to different interpretations of a reader's anticipation strategies.
The \renyi{} entropy is equivalent to \cref{eq:argmax_ent} when $\alpha = 0$, which measures the size of the support of $p(\cdot\mid \prevwords)$, or equivalently, the number of competing (word-level) continuations at a timestep.
Further, the \renyi{} entropy is equivalent to \cref{eq:argmin_ent} when $\alpha = \infty$, which measures the surprisal of the word with maximal probability in a context.
Finally, through L'Hôpital's rule, it is equivalent to \cref{eq:shannon_ent}, the Shannon entropy, when $\alpha = 1$. 
In general, however, \renyi{} entropy does not have as clear of an intuitive meaning when $\alpha \notin \{0, 1, \infty\}$.
Notably, the \renyi{} entropy with $\alpha=\sfrac{1}{2}$ will be relevant in our results' section.
In this case, it can be thought of as measuring a softened version of a distribution $p$'s support.\looseness=-1

\section{Anticipatory Mechanisms} \label{sec:mechanisms}
In this paper, we are mainly interested in the effect of anticipations on RTs, where  we operationalize anticipation in terms of contextual (\renyi{}) entropy, as defined above.
We consider four main mechanisms under which anticipation could affect RTs: word skipping, budgeting, preemptive processing, uncertainty cost.
We discuss each of these in turn.\looseness=-1

\newcommand{\paragraphsvspace}{-1pt}

\paragraph{Word skipping.} 
The first way in which anticipation could affect RTs is by allowing readers to skip words entirely, allocating the word a reading time of zero.
A reader must, by definition, decide whether or not to skip a word \emph{before} fixating on it.
We hypothesize the reader may thus decide to skip a word when they are confident in its identity, i.e., when the word's contextual entropy is low.\footnote{This is under the assumption that the reader is not able to identify upcoming words through their parafoveal vision.} 
If this hypothesis is true, then contextual entropy should be a good predictor of when a reader skips words.\looseness=-1

\vspace{\paragraphsvspace}
\paragraph{Budgeting.} 
The reading process can be described as a sequence of fixations and saccades.\footnote{Fixations are when the gaze focuses on a word; saccades are rapid eye movements shifting gaze from a point to another. 
In self-paced reading, saccades are similar to mouse clicks.}
Saccades, however, do not happen instantly: On average, they must be planned at least 125 milliseconds in advance \citep{reichle2009using}.
Further, there is an average eye-to-brain delay of 50ms \citep{pollatsek2008}.
We may thus estimate that the effects of a word's surprisal, as well as other word properties such as frequency, in RT allocation will only show up 175ms after that word is fixated, or later.\footnote{Again, this assumes that the reader has not identified the word parafoveally. 
A second caveat regarding this analysis is that, once a saccade is initiated, there is an initial period during which it can be canceled or reprogrammed to target a different location \citep{vangisbergen1987stimulus}.}
Considering this delay on saccade execution, it is not unreasonable that RTs could be decided (or budgeted) further in advance, when the reader still does not know word $\word_t$'s identity.
If a reader indeed budgets reading times beforehand, RTs should be---at least in part---predictable from the contextual entropy.
Processing costs, however, may still be driven by surprisal. 
In this case, we might observe budgeting effects: e.g., if a reader \emph{under}-budgets RTs for a word, i.e., if the word's contextual entropy is smaller than its actual surprisal, we may see a compensation, which could manifest as larger spillover effects in the following word.\looseness=-1

\vspace{\paragraphsvspace}
\paragraph{Preemptive Processing.}
Recent work \citep[e.g.,][]{willems2015prediction,goldstein2022shared} suggests that---especially for low entropy contexts---the brain starts preemptively processing future words before reaching them.\footnote{Specifically, they show the brain's processing load before a word's onset correlates negatively with its entropy.} 
Thus, shorter reading times in low entropy words at time $t+1$ may be compensated by longer times in the previous word $\word_t$.
However, recent work investigating the effect of successor entropy, i.e., word $(t+1)$'s entropy, on RTs has found conflicting results.\footnote{While \citet{roark-etal-2009-deriving} and \citet{van2017approximations} have found that successor entropy has a positive impact on RTs, i.e., that when $\word_{t+1}$ has lower entropy, word $\word_{t}$ takes a shorter time to be read, both \citet{linzen-jaeger-2014-investigating} and \citet{aurnhammer2019evaluating} have found no effect.}
Also on the topic of preprocessing, \citet{smith2008optimal} derive from first principles what a reader's optimal preprocessing effort should be for any given context: 
Under their assumption that reading times should be scale-free and that readers optimally trade off preprocessing and reading costs, a reader should always allocate a \emph{constant} amount of resources for preprocessing future words.\footnote{We give the full derivation---including the necessary assumptions---in \cref{app:constant_preprocessing_time} for completeness.}
We will investigate the effect of successor entropy on RTs in \cref{sec:preemptive_processing}.\looseness=-1

\vspace{\paragraphsvspace}
\paragraph{Uncertainty Cost.}
Finally, uncertainty about a word's identity, as quantified by its contextual entropy, may cause an increase in processing load directly.
For example, keeping a large number of competing word continuations under consideration may require additional cognitive resources, impacting the reader's processing load beyond the effect of the observed word's surprisal.
We know, however, no way of testing this hypothesis directly under our experimental setup.
Therefore, we will not analyze this mechanism specifically; only studying it in our main experiment (\cref{sec:exp_entropy_vs_surprisal}), where it is measured in aggregate with other mechanisms.

\vspace{-3pt}
\section{Experimental Setup}

\vspace{-3pt}
\subsection{Estimators} \label{sec:estimating_info}
\vspace{-2pt}

Unfortunately, we cannot compute the values discussed in \cref{sec:predictive_rt}, as we do not have access to the true natural language distribution $p(\cdot \mid \prevwords)$.
We can, however, estimate these values using a language model $\ptheta(\cdot \mid \prevwords)$.
We will thus use $\ptheta$ in place of $p$ in order to estimate all the information-theoretic quantities in \cref{sec:predictive_rt}. 
Using language model-based estimators is standard practice when investigating the relationship between RTs and information-theoretic quantities, e.g., surprisal.

\vspace{-2pt}
\paragraph{Language Models.} 
We use \gptsmall{} \cite{gpt2} as our language model $\ptheta$ in all experiments.\footnote{We make use of \citeposs{wolf-etal-2020-transformers} library.}
Although some work has shown that a language model's quality correlates with its psychometric predictive power \citep{goodkind-bicknell-2018-predictive,wilcox2020predictive}, both \citet{shain2022large} and \citet{oh2022why} have more recently found that \gptsmall{}'s surprisal estimates are actually more predictive of RTs than those of both larger versions of \gpt{} and GPT-3.
We note, however, that \gpt predicts subwords at each time-step, rather than predicting full words.
Thus, to get word-level surprisal, we must sum over the subwords' surprisal estimates.
In some cases, many distinct subword sequences may represent a single word.
In this case, we only consider the \emph{canonical} subword sequence output by \gpt's tokenizer.
Estimating the contextual entropy per word is harder because computing it requires summing over the entire vocabulary $\vocab$, whose cardinality can be infinite.
We approximate the contextual entropy 
by computing the entropy over the subwords instead.\footnote{There are many ways to estimate the Rényi entropy, e.g., one could also have estimated the Rényi entropy by assuming a fixed finite vocabulary $\vocab$, and then computed the probability of the words' canonical tokenizations.\looseness=-1}
In practice, this is equivalent to computing a lower bound on the true contextual entropies, as we show in \cref{app:lowerbound}.\looseness=-1

\vspace{-5pt}
\subsection{Data}
\vspace{-2pt}

We perform our analyses on two eye-tracking and two self-paced reading datasets.
The self-paced reading corpora we study are the Natural Stories Corpus \citep{futrell-etal-2018-natural} and the Brown Corpus \citep{smith2013-log-reading-time}.
The eye-tracking corpora are the Provo Corpus \citep{provo} and the Dundee Corpus \citep{dundee}.
We refer readers to \cref{app:data} for more details on these corpora, as well as dataset statistics and preprocessing steps.
For the eye-tracking data, we focus our analyses on Progressive Gaze Duration: 
A word's RT is taken to be the sum of all fixations on it before a reader first passes it, i.e.,
we only consider fixations in a reader's first forward pass. 
Further, for our first set of experiments, we consider a skipped word’s RT to be zero {\citep[following][]{rayner2011eye};\footnote{This choice goes against the more common practice of simply discarding skipped words from the analyses. 
Our experimental paradigm is based on two factors.
First, we are interested in word skipping as a mechanism by which anticipation impacts RTs. 
Second, we want to make the eye-tracking setting more closely comparable to the self-paced reading, where fully skipping a word is not possible.\looseness=-1
}
we denote these datasets as Provo (\cmark) and Dundee (\cmark).
In later experiments, we discard skipped words, denoting these datasets with an (\xmark{}) instead.
Following prior work \citep[e.g.,][]{wilcox2020predictive}, we average RT measurements across readers, analyzing one RT value per word token.\looseness=-1  

\subsection{Linear Modeling}

Prior work has shown the surprisal--RT relationship to be mostly linear \citep{smith2008optimal,smith2013-log-reading-time,shain2022large}. 
Assuming this linearity extends to the contextual entropy--RT relationship,
we restrict our predictive function to be linear:\footnote{As both Shannon and \renyi{} entropies are linear functions of surprisal, we believe this assumption is justifiable.\looseness=-1}
$\predfunc(\bx) = \gamparams^{\intercal} \bx$, where $\gamparams$ is a column vector which parameterizes $\predfunc$.
Further, given data $\dataset = \{(\bx_n, \timefunc_n)\}_{n=1}^N$, regressor $\predfunc(\bx)$'s \emph{average} log-likelihood on $\dataset$ is given by\looseness=-1%
\begin{align}
&\llh(\predfunc(\bx)) = \frac{1}{N} \log \prod_{n=1}^N\frac{e^{-\frac{\left(\timefunc_n - \predfunc(\bx_n)\right)^2}{2\sigma^2}} }{\sqrt{2\pi \sigma^2}} \nonumber \\
&= - \frac{1}{N}\sum_{n=1}^N \left( \log \sqrt{2\pi \sigma^2} + \frac{\left(\timefunc_n - \predfunc(\bx_n)\right)^2}{2\sigma^2} \right)   \nonumber \\
&= - \log \sqrt{2\pi \sigma^2} - \sum_{n=1}^N \frac{\left(\timefunc_n - \predfunc(\bx_n)\right)^2 }{2N \sigma^2}
\end{align}
assuming Gaussian errors with variance $\sigma^2 > 0$.\footnote{We note that RTs cannot be negative, and thus prediction errors will not actually be Gaussian.}\looseness=-1

\subsection{Evaluation}

We evaluate the different sentence processing hypotheses by looking at the predictive power of their associated regressors.
Predictive power is quantified as the log-likelihood on held-out data.
We use 10-fold cross-validation, estimating our regressors, given in \cref{eq:regressor}, using 9 folds of the data at a time, and evaluating them on the 10$^\text{th}$ fold.
Further, as is standard in RT analyses, we test the predictive power of a hypothesis by comparing a target model against a baseline model.
These models differ only in that the target model contains a predictor of interest, whereas the baseline model does not.
Our metric of interest is thus the difference in log-likelihood of held-out data between the two models:%
\footnote{\label{footnote:significance}Significance is assessed using a paired permutation test.
We correct for multiple hypothesis testing \citep{benjamini1995controlling} and mark: \textcolor{mygreen}{in green} significant $\deltallh$ where a variable adds predictive power (i.e., when the model with more predictors is better), \textcolor{myred}{in red} significant $\deltallh$ where a variable leads to overfitting (i.e., when the model with more predictors is worse).
$^*$~$p<0.05$, $^{**}$~$p<0.01$, $^{***}$~$p<0.001$.\looseness=-1}\looseness=-1%
\begin{equation}
    \deltallh = \llh(\predfunc(\bxmodel)) - \llh(\predfunc(\bxbase))
\end{equation}
which, when positive, indicates the target model explains this data better than the base model.\looseness=-1

\begin{table}[t]
    \centering
\resizebox{\columnwidth}{!}{%
    \begin{tabular}{lccccc}
    \toprule
         &
         & \multicolumn{4}{c}{Surprisal}
         \\
         \cmidrule(lr){3-6}
         &
         & $\word_{t\!-\!3}$ & $\word_{t\!-\!2}$ & $\word_{t\!-\!1}$ & $\word_{t}$
         \\
\midrule

\multicolumn{2}{l}{Brown} & \phantom{-}\textcolor{mygreen}{0.33}$^{***}$& \phantom{-}\textcolor{mygreen}{0.47}$^{***}$& \phantom{-}\textcolor{mygreen}{2.58}$^{***}$& \phantom{-}\textcolor{mygreen}{0.50}$^{*}$\phantom{$^{**}$} \\
\multicolumn{2}{l}{Natural Stories} & \phantom{-}\textcolor{mygreen}{0.20}$^{*}$\phantom{$^{**}$}& \phantom{-}\textcolor{mygreen}{0.34}$^{*}$\phantom{$^{**}$}& \phantom{-}\textcolor{mygreen}{1.05}$^{***}$& \phantom{-}\textcolor{mygreen}{1.54}$^{***}$ \\
Provo & (\cmark) & \phantom{-}{0.07}\phantom{$^{***}$}& \phantom{-}{0.18}\phantom{$^{***}$}& \phantom{-}\textcolor{mygreen}{0.83}$^{*}$\phantom{$^{**}$}& \phantom{-}\textcolor{mygreen}{3.22}$^{**}$\phantom{$^{*}$} \\
Dundee & (\cmark) & {-0.00}\phantom{$^{***}$}& \phantom{-}\textcolor{mygreen}{0.04}$^{**}$\phantom{$^{*}$}& \phantom{-}\textcolor{mygreen}{0.25}$^{***}$& \phantom{-}\textcolor{mygreen}{0.89}$^{***}$ \\

\bottomrule 
\\[-8pt]
\multicolumn{6}{l}{\small 
$\bxmodel_t = \bxcommon_t \oplus \bxsurp_t$\,\,\,\,\,\, vs.  \,\,\,\,\,\,
$\bxbase_t = \bxcommon_t \oplus \bxsurpnoti_t$
}
    \end{tabular}
}
\vspace{-5pt}
    \caption{$\deltallh$ (in $10^{-2}$ nats) when comparing a model with all surprisal terms against baselines from which a single surprisal term was removed. Green indicates a significantly positive impact of surprisal on the model's predictive power.}
    \label{tab:delta_llh_surprisal}
    \vspace{-5pt}
\end{table}

\section{Experiments and Results}

\subsection{Experiment \#1: Confirmatory Analysis}\label{sec:surprisal_analysis}
In the first experiment, we confirm prior results that show the predictive power of surprisal on RTs.
First, we define the following sets of predictors:
\begin{subequations}
\begin{align}
    &\bxcommon_t = [|\word_t|, u(\word_t), \dots, |\word_{t\!-\!3}|, u(\word_{t\!-\!3})]^{\intercal} \\
    &\bxsurp_t = [h_{t}(\word_{t}), \dots, h_{t\!-\!3}(\word_{t\!-\!3})]^{\intercal} \\
    &\bxsurpnott_t = [h_{t\!-\!1}(\word_{t\!-\!1}), \dots, h_{t\!-\!3}(\word_{t\!-\!3})]^{\intercal}
\end{align}
\end{subequations}
where $|w_t|$ is the word length in characters and $u(w_t)$ is the unigram frequency of the $t^{\text{th}}$ word.
Notably, we include predictors for words $\word_{t\!-\!1}$, $\word_{t\!-\!2}$, and $\word_{t\!-\!3}$ because prior work has shown that a word's RT is impacted not only by its own surprisal, but also by the surprisal of previous words.
These effects are referred to as \defn{spillover effects}.
We then estimate the $\deltallh$ between
\begin{align}
    &\bxmodel_t = \bxcommon_t \oplus \bxsurp_t \\
    &\bxbase_t = \bxcommon_t \oplus \bxsurpnoti_t
\end{align}
where $\oplus$ stands for the vertical concatenation of two vectors and $t' \in \{t, t\!-\!1, t\!-\!2, t\!-\!3\}$.
In words, $\bxmodel$ includes all surprisal predictors $\bxsurp_t$, while for the baseline model $\bxbase$ we remove surprisal predictors one at a time.}
We present these results in \cref{tab:delta_llh_surprisal}.
The results show that the surprisal of word $\word_t$ is a strong predictor of RTs in all four analyzed datasets.
Additionally, we see significant spillover effects for the surprisal of three previous words in self-paced reading corpora, for the two previous words in Dundee, and for the single previous one in Provo.
Interestingly, and consistent with prior work \citep{smith2008optimal,smith2013-log-reading-time}, we find that spillover effects are stronger than the current word's effect in Brown.
On the other three datasets, however, we find the surprisal effect on the current word to be stronger than the spillover effects.

\begin{table}[t]
    \centering
\resizebox{\columnwidth}{!}{%
    \begin{tabular}{lccccc}
    \toprule
         && $\word_{t\!-\!3}$ & $\word_{t\!-\!2}$ & $\word_{t\!-\!1}$ & $\word_{t}$
         \\
\midrule
\multicolumn{6}{l}{Replace Surprisal with Entropy$^1$} \\
\cmidrule(lr){1-1}
    \multicolumn{2}{l}{Brown} & \textcolor{myred}{-0.30}$^{*}$\phantom{$^{**}$}& \textcolor{myred}{-0.35}$^{**}$\phantom{$^{*}$}& \textcolor{myred}{-1.68}$^{***}$& {-0.03}\phantom{$^{***}$} \\
    \multicolumn{2}{l}{Natural Stories} & {-0.03}\phantom{$^{***}$}& \textcolor{myred}{-0.19}$^{*}$\phantom{$^{**}$}& \textcolor{myred}{-0.41}$^{*}$\phantom{$^{**}$}& \phantom{-}{0.37}\phantom{$^{***}$} \\
    Provo & (\cmark) & {-0.08}\phantom{$^{***}$}& \phantom{-}{0.18}\phantom{$^{***}$}& \textcolor{myred}{-0.66}$^{*}$\phantom{$^{**}$}& \textcolor{myred}{-2.58}$^{*}$\phantom{$^{**}$} \\
    Dundee & (\cmark) & {-0.00}\phantom{$^{***}$}& \phantom{-}\textcolor{mygreen}{0.03}$^{*}$\phantom{$^{**}$}& \textcolor{myred}{-0.21}$^{***}$& {-0.07}\phantom{$^{***}$} \\
\midrule
\multicolumn{6}{l}{Add Entropy$^2$} \\
\cmidrule(lr){1-1}
    \multicolumn{2}{l}{Brown} & \textcolor{myred}{-0.03}$^{*}$\phantom{$^{**}$}& {-0.01}\phantom{$^{***}$}& \phantom{-}{0.04}\phantom{$^{***}$}& \phantom{-}\textcolor{mygreen}{0.15}$^{*}$\phantom{$^{**}$} \\
    \multicolumn{2}{l}{Natural Stories} & \phantom{-}{0.04}\phantom{$^{***}$}& \phantom{-}{0.01}\phantom{$^{***}$}& \phantom{-}\textcolor{mygreen}{0.14}$^{***}$& \phantom{-}\textcolor{mygreen}{0.89}$^{***}$ \\
    Provo & (\cmark) & \textcolor{myred}{-0.04}$^{*}$\phantom{$^{**}$}& \phantom{-}{0.16}\phantom{$^{***}$}& {-0.03}\phantom{$^{***}$}& {-0.06}\phantom{$^{***}$} \\
    Dundee & (\cmark) & \textcolor{myred}{-0.00}$^{**}$\phantom{$^{*}$}& \phantom{-}\textcolor{mygreen}{0.03}$^{**}$\phantom{$^{*}$}& {-0.00}\phantom{$^{***}$}& \phantom{-}\textcolor{mygreen}{0.25}$^{***}$ \\
\bottomrule
\\[-8pt]
\multicolumn{6}{l}{\small 
$^1$ 
$\bxmodel_t = \bxcommon_t \oplus \bxsurpnoti_t \oplus [\enti]^\intercal$
} \\\multicolumn{6}{l}{\small 
$^2$
$\bxmodel_t = \bxcommon_t \oplus \bxsurp_t \oplus [\enti]^\intercal$
} \\\multicolumn{6}{l}{\small 
$^\mathrm{both}$
$\bxbase_t = \bxcommon_t \oplus \bxsurp_t$,
}
    \end{tabular}
}
\vspace{-5pt}
    \caption{$\deltallh$ (in $10^{-2}$ nats) achieved after either replacing a surprisal term in the baseline with Shannon's entropy (top), or adding the entropy as an extra predictor (bottom). Green indicates a significant gain in $\deltallh$, red a significant loss.}
    \label{tab:delta_llh_entropy}
\end{table}

\subsection{Experiment \#2: Surprisal vs. Entropy} \label{sec:exp_entropy_vs_surprisal}

In the second experiment, we analyze the predictive power of the contextual Shannon entropy on RTs.
Specifically, \cref{tab:delta_llh_entropy} presents the $\deltallh$ between the baseline model 
$\bxbase_t = \bxcommon_t \oplus \bxsurp_t$ and two target models.
The first is a model where the entropy term $\ent$ is added \emph{in addition} to the predictors already present in $\bxbase$.
The second is a model where the surprisal term $h_t(w_t)$ is replaced by the entropy term $\ent$.
From \cref{tab:delta_llh_entropy}, we see that adding the entropy of the current word significantly increases the predictive power in three out of the four analyzed datasets.
Furthermore, replacing the surprisal predictor with the entropy only leads to a model with worse predictive power in one of the three analyzed datasets (in Provo).
On the other three datasets, the entropy's predictive power is as good as the surprisal's---more precisely, they are not statistically different.
Together, these results suggest that the reading process is both responsive and anticipatory.\looseness=-1

Analyzing the impact of the previous words' entropies, i.e., $\entprev$,  $\entprevprev$,  $\entprevprevprev$,\footnote{We term these predictors \emph{spillover} entropy effects by analogy to the surprisal case. As before, we omit the conditioning factor on these entropies for notational succinctness, i.e., we write $\entprev$ instead of $ \entfunc(\Word_{t-1} \mid \boldsymbol{W}_{<t-1} = \words_{< t-1})$.} on RTs, we see a somewhat different story. 
When adding spillover entropy terms as extra predictors we see no consistent improvements in predictive power.
We observe a weak improvement on self-paced reading datasets when adding $\entprev$ as a predictor, but, even then, the improvement is only significant on Natural Stories.
We find a similarly weak effect when adding $\entprevprev$ on eye-tracking data, which is only significant on the Dundee corpus. 
This lack of predictive power further stands out when contrasted to surprisal spillover effects, which were mostly significant; see \cref{tab:delta_llh_surprisal}.
Furthermore, replacing surprisal spillover terms with the corresponding entropy terms generally leads to models with weaker predictive power.
Together, these results imply the effect of entropy (expected surprisal) on RTs is mostly local, i.e., the expectation over a word's surprisal impacts its RT, but not future words' RTs.\looseness=-1

\begin{figure*}[t]
    \centering
    \begin{subfigure}[b]{.25\textwidth}
        \includegraphics[width=\columnwidth,trim={0 \trimheight{} 0 0},clip]{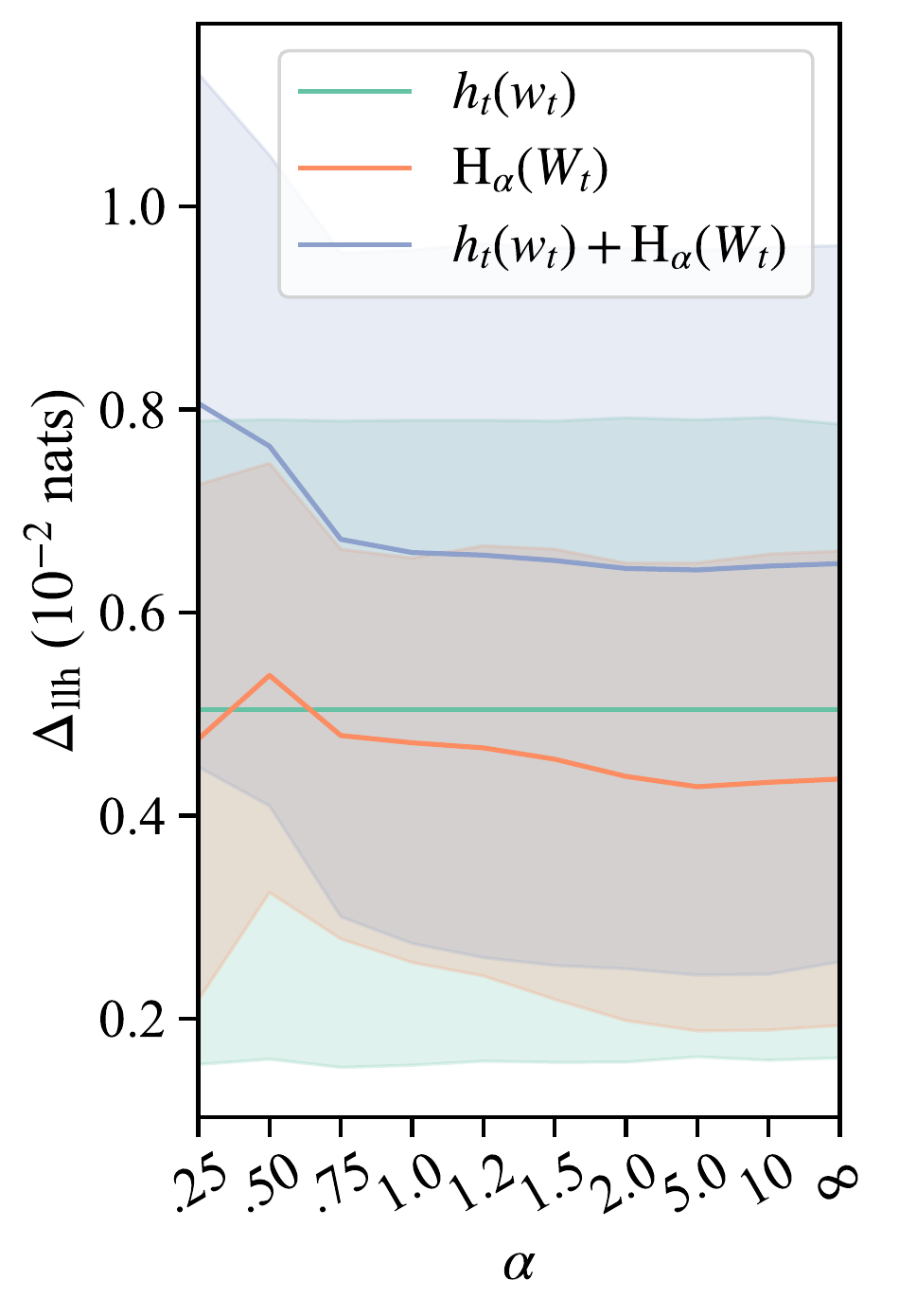}
        \caption{Brown}
    \end{subfigure}%
    ~
    \begin{subfigure}[b]{.25\textwidth}
        \includegraphics[width=\columnwidth,trim={0 \trimheight{} 0 0},clip]{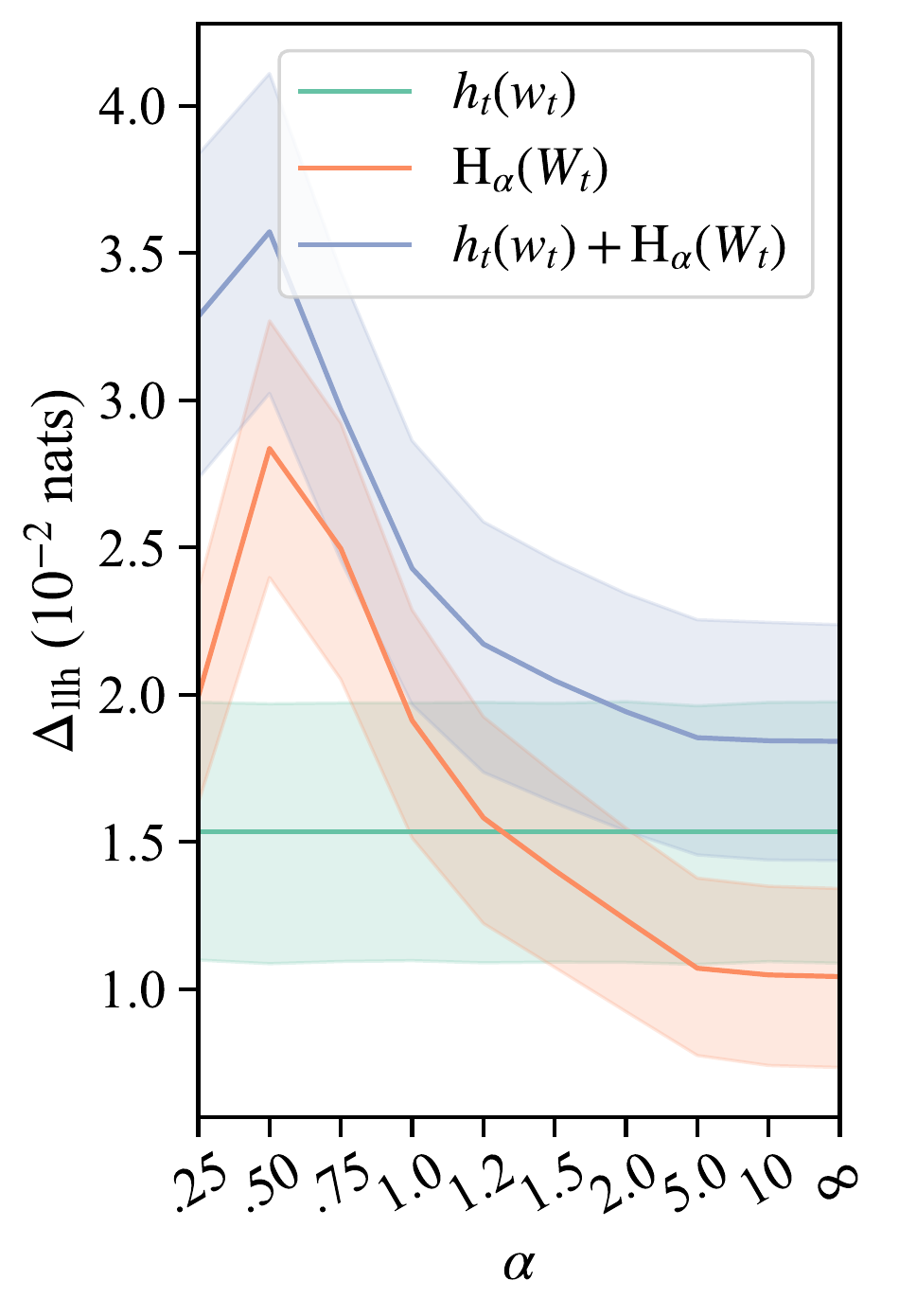}
        \caption{Natural Stories}
    \end{subfigure}%
    ~
    \begin{subfigure}[b]{.25\textwidth}
        \includegraphics[width=\columnwidth,trim={0 \trimheight{} 0 0},clip]{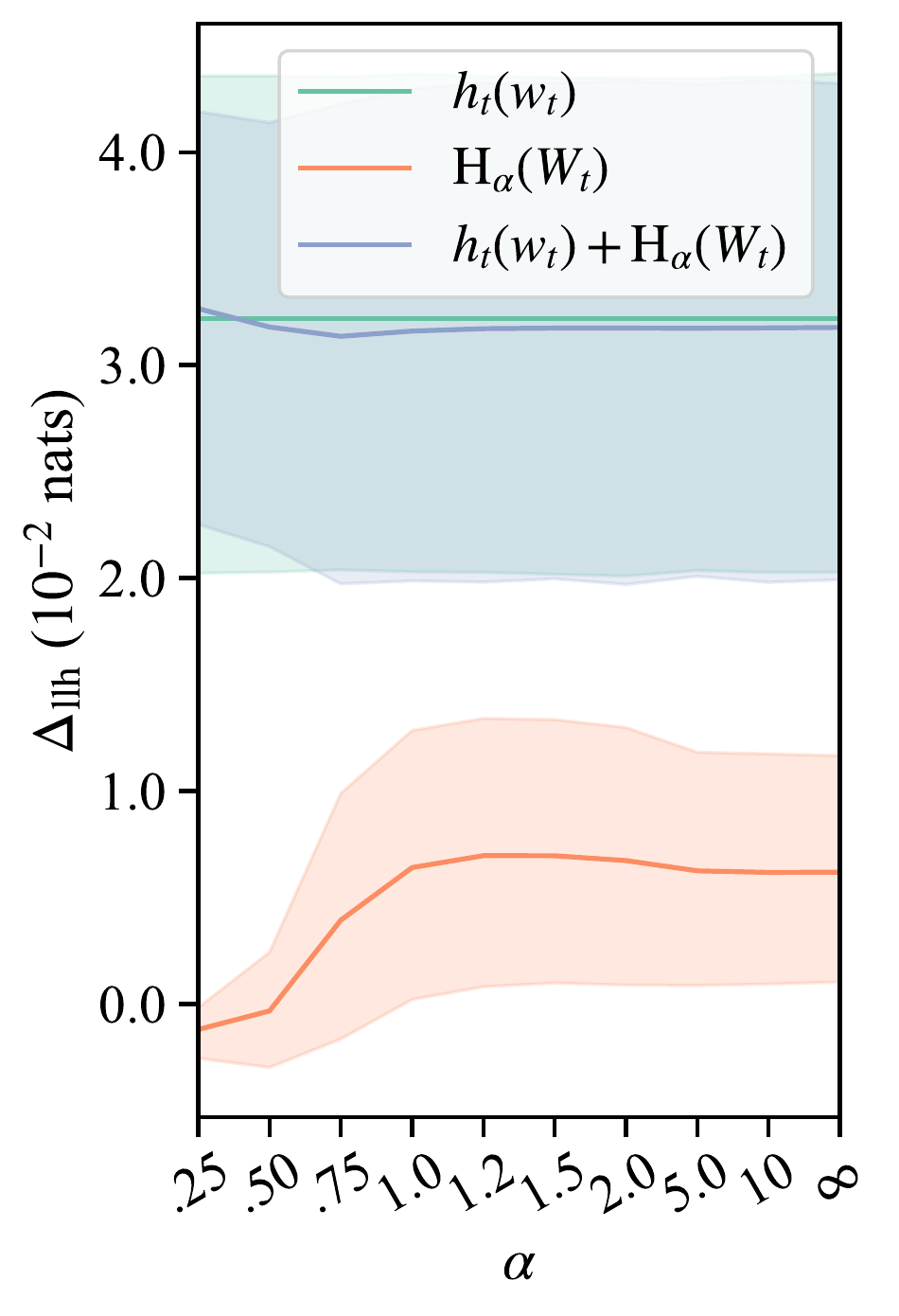}
        \caption{Provo (\cmark)}
    \end{subfigure}%
    ~
    \begin{subfigure}[b]{.25\textwidth}
        \includegraphics[width=\columnwidth,trim={0 \trimheight{} 0 0},clip]{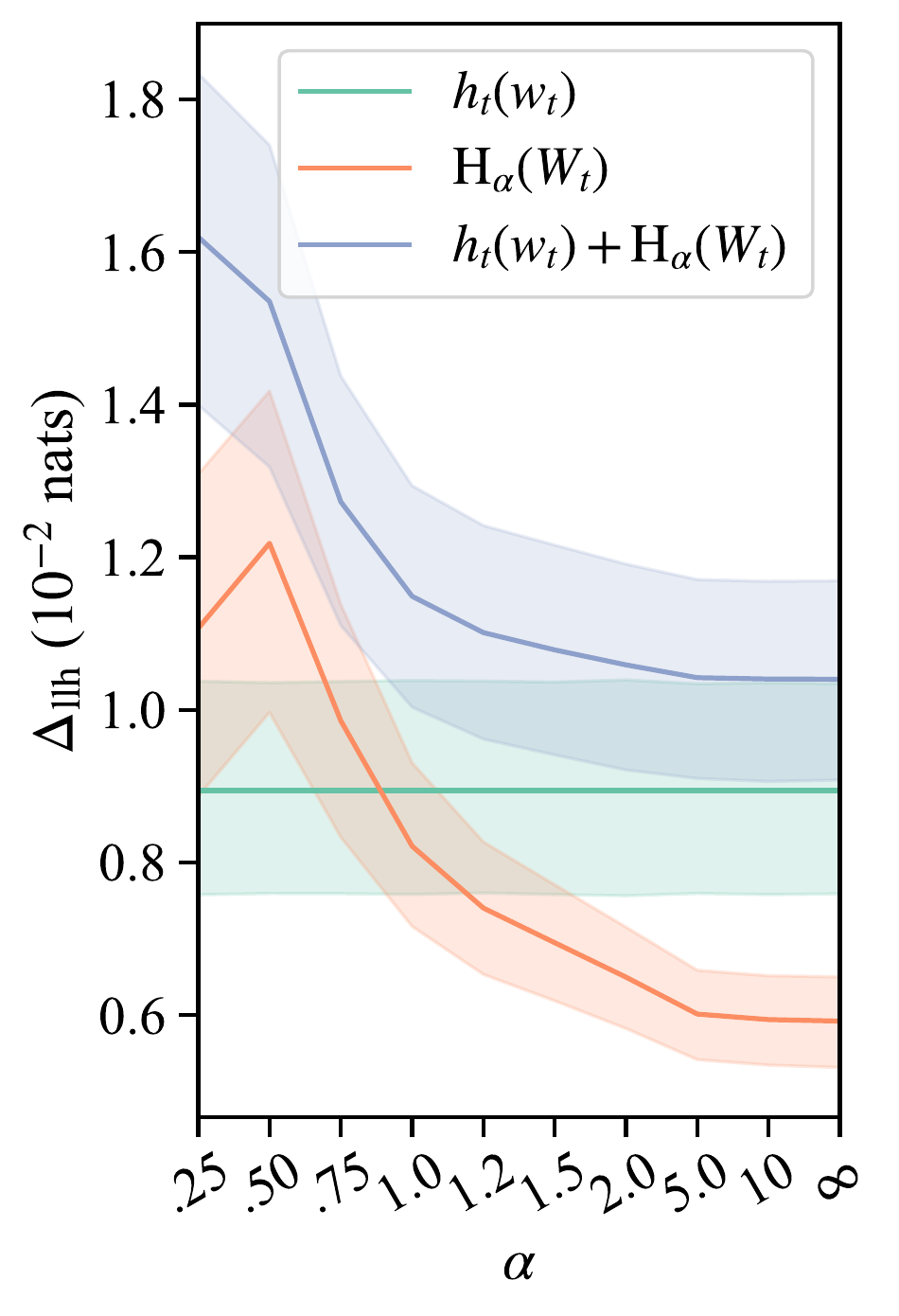}
        \caption{Dundee (\cmark)}
    \end{subfigure}
    \vspace{-17pt}
    \caption{$\deltallh$ when adding either the current word's surprisal, \renyi{} entropy, or both on top of a baseline that includes the surprisal of previous words as predictors, i.e., $\bxbase = \bxcommon_t \oplus \bxsurpnott$.
    Shaded regions correspond to 95\% confidence intervals.}
    \label{fig:renyi}
    \vspace{-5pt}
\end{figure*}

\begin{table*}
    \centering
\resizebox{.85\textwidth}{!}{%
    \begin{tabular}{lccccccccc}
    \toprule
         &
         & \multicolumn{4}{c}{Replace Surprisal with \renyi{} Entropy$^1$} 
         & \multicolumn{4}{c}{Add \renyi{} Entropy$^2$}
         \\
         \cmidrule(lr){3-6}
         \cmidrule(lr){7-10}
         &
         & $\word_{t\!-\!3}$ & $\word_{t\!-\!2}$ & $\word_{t\!-\!1}$ & $\word_{t}$
         & $\word_{t\!-\!3}$ & $\word_{t\!-\!2}$ & $\word_{t\!-\!1}$ & $\word_{t}$
         \\
\midrule

\multicolumn{2}{l}{Brown} & \textcolor{myred}{-0.35}$^{***}$& \textcolor{myred}{-0.37}$^{**}$\phantom{$^{*}$}& \textcolor{myred}{-1.76}$^{***}$& \phantom{-}{0.03}\phantom{$^{***}$}& {-0.01}\phantom{$^{***}$}& \phantom{-}{0.00}\phantom{$^{***}$}& \phantom{-}{0.14}\phantom{$^{***}$}& \phantom{-}\textcolor{mygreen}{0.26}$^{*}$\phantom{$^{**}$} \\
\multicolumn{2}{l}{Natural Stories\,\,\,\,\,\,} & {-0.16}\phantom{$^{***}$}& \textcolor{myred}{-0.27}$^{*}$\phantom{$^{**}$}& {-0.19}\phantom{$^{***}$}& \phantom{-}\textcolor{mygreen}{1.30}$^{**}$\phantom{$^{*}$}& {-0.00}\phantom{$^{***}$}& {-0.00}\phantom{$^{***}$}& \phantom{-}\textcolor{mygreen}{0.44}$^{***}$& \phantom{-}\textcolor{mygreen}{2.04}$^{***}$ \\
Provo & (\cmark) & {-0.05}\phantom{$^{***}$}& \phantom{-}{0.47}\phantom{$^{***}$}& \textcolor{myred}{-0.89}$^{*}$\phantom{$^{**}$}& \textcolor{myred}{-3.25}$^{**}$\phantom{$^{*}$}& {-0.01}\phantom{$^{***}$}& \phantom{-}\textcolor{mygreen}{0.45}$^{*}$\phantom{$^{**}$}& {-0.01}\phantom{$^{***}$}& {-0.04}\phantom{$^{***}$} \\
Dundee & (\cmark) & \phantom{-}{0.00}\phantom{$^{***}$}& \phantom{-}\textcolor{mygreen}{0.07}$^{*}$\phantom{$^{**}$}& \textcolor{myred}{-0.25}$^{***}$& \phantom{-}\textcolor{mygreen}{0.32}$^{*}$\phantom{$^{**}$}& {-0.00}\phantom{$^{***}$}& \phantom{-}\textcolor{mygreen}{0.08}$^{*}$\phantom{$^{**}$}& \phantom{-}{0.05}\phantom{$^{***}$}& \phantom{-}\textcolor{mygreen}{0.64}$^{***}$ \\

\bottomrule
\\[-8pt]
\multicolumn{10}{l}{\small 
$^1$ 
$\bxmodel_t = \bxcommon_t \oplus \bxsurpnoti_t \oplus [\renyienti]^\intercal$,
\,\,\,
$^2$
$\bxmodel_t = \bxcommon_t \oplus \bxsurp_t \oplus [\renyienti]^\intercal$,
\,\,\,
$^\mathrm{both}$
$\bxbase_t=\bxcommon_t \oplus \bxsurp_t$
}
    \end{tabular}
}
\vspace{-5pt}
    \caption{$\deltallh$ (in $10^{-2}$ nats) achieved after either replacing a surprisal term in the baseline with the contextual \renyi{} entropy ($\alpha=\sfrac{1}{2}$), or adding the \renyi{} entropy as an extra predictor.}
    \label{tab:delta_llh_entropy_renyi}
    \vspace{-5pt}
\end{table*}

\subsection{Experiment \#3: Skewed Expectations}
\label{sec:skewed_expectations_analysis}
We now compare the effect of \renyi{} entropy with $\alpha \neq 1$ on RTs.
We follow a similar setup to before.
Specifically, we compute the contextual \renyi{} entropy for several values of $\alpha$.
We then train regressors where we either add the \renyi{} entropy as an additional predictor, or where we replace the current word's surprisal $h_t(w_t)$ with the \renyi{} entropy. 
We then plot these values in \cref{fig:renyi}.
Analyzing this figure, we see that Provo again presents different trends from the other datasets.
We also see a clear trend in the three other datasets:
The predictive power of expectations seem to improve for smaller values of $\alpha$.
More precisely, in Brown, Natural Stories and Dundee, $\alpha=\sfrac{1}{2}$ seems to lead to stronger predictive powers than $\alpha>\sfrac{1}{2}$.\looseness=-1

Based on these results, we then produce a similar table to the previous experiment's, but  using the \renyi{} entropy with $\alpha=\sfrac{1}{2}$ instead.
These results are depicted in \cref{tab:delta_llh_entropy_renyi}.
Similarly to before, we still see a significant improvement in predictive powers on three of the datasets when adding the entropy as an extra predictor.
Unlike before, however, replacing the surprisal predictors (for time step $t$) with \renyi{} entropy predictors significantly improves log-likelihoods in two of the analyzed datasets.
In other words, the \renyi{} entropy has a stronger predictive power than the surprisal in both these datasets.
We now move on to investigate why this is the case, analyzing the mechanisms proposed in \cref{sec:mechanisms}.\looseness=-1

\begin{table*}[t]
    \centering
    \small
    \begin{tabular}{llcccccc}
    \toprule
& & \multicolumn{3}{c}{Provo} 
& \multicolumn{3}{c}{Dundee} 
\\
\cmidrule(lr){3-5}
\cmidrule(lr){6-8}
& 
& $h_t(w_t)$
& $\renyient$
& Both
& $h_t(w_t)$
& $\renyient$
& Both
\\
\midrule
\multirow{3}{*}{\shortstack[c]{Shannon \\ ($\alpha=1$)}}

& $\emptyset$ 
& \phantom{-}{2.60}
& \phantom{-}{1.76}
& \phantom{-}{2.86}

& \phantom{-}\textcolor{mygreen}{1.30}$^{*}$
& \phantom{-}\textcolor{mygreen}{2.50}$^{***}$ 
& \phantom{-}\textcolor{mygreen}{2.62}$^{***}$ 
\\

& $\surp$ 
& - & {-0.84}& \phantom{-}{0.26} 
& - & \phantom{-}{1.20}\phantom{$^{***}$}& \phantom{-}\textcolor{mygreen}{1.32}$^{***}$ 
\\

& $\renyient$ 
& - & - & \phantom{-}{1.10} 
& - & - & \phantom{-}{0.12}\phantom{$^{***}$}
\\

\midrule
\multirow{3}{*}{\shortstack[c]{\renyi{} \\ ($\alpha=\sfrac{1}{2}$)}}
& $\emptyset$ 
& \phantom{-}{2.60}
& \phantom{-}{0.84}
& \phantom{-}{2.66}

& \phantom{-}\textcolor{mygreen}{1.30}$^{*}$
& \phantom{-}\textcolor{mygreen}{5.10}$^{***}$
& \phantom{-}\textcolor{mygreen}{5.14}$^{***}$
\\
& $\surp$ 
& -
& {-1.76}
& \phantom{-}{0.06}

& -
& \phantom{-}\textcolor{mygreen}{3.79}$^{***}$
& \phantom{-}\textcolor{mygreen}{3.83}$^{***}$
\\

& $\renyient$ 
& -
& -
& \phantom{-}{1.82} 

& -
& -
& \phantom{-}{0.04}\phantom{$^{***}$}
\\
\bottomrule
    \end{tabular}
\vspace{-4pt}
    \caption{$\deltallh$ (in $10^{-4}$ nats) between a target model (with predictors on columns) vs baseline (with predictors on row) when predicting whether a word was skipped or not. 
    All models also include the surprisal of the previous words as predictors as well as length and unigram frequencies}.
    \label{tab:delta_llh_skip}
\vspace{-4pt}
\end{table*}

\subsection{Experiment \#4: Word Skipping} \label{sec:word-skip_planning}
In \cref{sec:mechanisms}, we discussed four potential mechanisms through which expectations could impact RTs.
In this experiment, we analyze the impact of word-skipping effects on our results.
We thus only consider the two eye-tracking datasets in this experiment, as self-paced reading does not allow for word skipping.
We start this analysis by, similarly to previous experiments, looking at the $\deltallh$ between a baseline model and an additional model that captures our target effect.
In contrast to previous experiments, though, we employ a logistic regressor that predicts whether or not a word was skipped during the readers' initial pass.
Our prediction function can thus be written as
\begin{equation}
    \predfunc(\bx) = \sigma(\gamparams^\intercal\, \bx)
\end{equation}
where $\gamparams$ is a column vector of the model's parameters and $\sigma$ is the sigmoid function.
Now, given data $\dataset = \{(\bx_n, \skipratio_n)\}_{n=1}^N$, where $\skipratio$ represents the ratio of readers who skipped a word,
the average log-likelihood of this predictor on $\dataset$ is:\looseness=-1%
\begin{align}
    &\llh(\predfunc(\bx)) = \\ 
    &\,\,\,\,\sum_{n=1}^N \frac{\skipratio_n \log \predfunc(\bx_n) + (1\!-\!\skipratio_n) \log (1\!-\!\predfunc(\bx_n)) }{N} \nonumber
\end{align}
Notably, having $y$ represent the ratio of readers who skipped a word---as opposed to the per-reader binary skipped vs not distinction---is equivalent to averaging the predicted feature across readers, as we do when predicting reading times.

\cref{tab:delta_llh_skip} presents our results.
First, we see that surprisal is a significant predictor of whether or not a word is skipped in Dundee; however, it is not a significant predictor in Provo.
Second, we find that in Dundee the  predictive power over whether a word was skipped is significantly stronger when using the \renyi{} entropy of the current word than when using its surprisal.
Finally, while we find an improvement in predictive power when adding entropy (in addition to surprisal) as a predictor, we find no significant improvement when starting with entropy and adding surprisal.
This implies that, at least for Dundee, word-skipping effects are predicted solely by the entropy, with the surprisal of the current word adding no extra predictive power.

\begin{table}
    \centering
\resizebox{\columnwidth}{!}{%
    \begin{tabular}{llcccc}
    \toprule
         & & $\word_{t\!-\!3}$ & $\word_{t\!-\!2}$ & $\word_{t\!-\!1}$ & $\word_{t}$
         \\
\midrule
\multicolumn{5}{l}{Shannon Entropy ($\alpha=1$)} \\
\cmidrule(lr){1-1}
\multirow{2}{*}{Replace$^1$}
& Provo & \phantom{-}{0.02}\phantom{$^{***}$}& {-0.03}\phantom{$^{***}$}& {-0.18}\phantom{$^{***}$}& \textcolor{myred}{-2.23}$^{***}$ \\
& Dundee & {-0.01}\phantom{$^{***}$}& {-0.02}\phantom{$^{***}$}& \textcolor{myred}{-0.15}$^{***}$& \textcolor{myred}{-0.32}$^{**}$\phantom{$^{*}$} \\
[7pt]
\multirow{2}{*}{Add$^2$}
& Provo & {-0.02}\phantom{$^{***}$}& {-0.07}\phantom{$^{***}$}& \phantom{-}{0.03}\phantom{$^{***}$}& {-0.01}\phantom{$^{***}$} \\
& Dundee & \textcolor{myred}{-0.00}$^{*}$\phantom{$^{**}$}& \phantom{-}{0.01}\phantom{$^{***}$}& \phantom{-}{0.01}\phantom{$^{***}$}& \phantom{-}\textcolor{mygreen}{0.17}$^{**}$\phantom{$^{*}$} \\
\midrule
\multicolumn{5}{l}{Renyi Entropy ($\alpha=\sfrac{1}{2}$)} \\
\cmidrule(lr){1-1}
\multirow{2}{*}{Replace$^1$}
& Provo & \phantom{-}{0.02}\phantom{$^{***}$}& \phantom{-}{0.10}\phantom{$^{***}$}& {-0.27}\phantom{$^{***}$}& \textcolor{myred}{-2.43}$^{**}$\phantom{$^{*}$} \\
& Dundee & {-0.01}\phantom{$^{***}$}& \phantom{-}{0.01}\phantom{$^{***}$}& \textcolor{myred}{-0.19}$^{***}$& {-0.18}\phantom{$^{***}$} \\
[7pt]
\multirow{2}{*}{Add$^2$}
& Provo & {-0.02}\phantom{$^{***}$}& \phantom{-}{0.07}\phantom{$^{***}$}& \phantom{-}{0.01}\phantom{$^{***}$}& \phantom{-}{0.32}\phantom{$^{***}$} \\
& Dundee & \textcolor{myred}{-0.00}$^{*}$\phantom{$^{**}$}& \phantom{-}\textcolor{mygreen}{0.05}$^{*}$\phantom{$^{**}$}& \phantom{-}{0.01}\phantom{$^{***}$}& \phantom{-}\textcolor{mygreen}{0.36}$^{***}$ \\
\bottomrule
\\[-8pt]
\multicolumn{6}{l}{
$^1$
$\bxmodel_t = \bxcommon_t \oplus \bxsurpnoti_t \oplus [\renyienti]^\intercal$, 
} \\ \multicolumn{6}{l}{
$^2$
$\bxmodel_t = \bxcommon_t \oplus \bxsurp_t \oplus [\renyienti]^\intercal$,
} \\ \multicolumn{6}{l}{
$^{\mathrm{both}}$
$\bxbase_t = \bxcommon_t \oplus \bxsurp_t$
}
    \end{tabular}
}
\vspace{-5pt}
    \caption{$\deltallh$ (in $10^{-2}$ nats) when predicting RTs on eye-tracking datasets where skipped words were removed, i.e., Provo (\xmark) and Dundee (\xmark).\looseness=-1}
    \label{tab:delta_llh_entropy_no_skipped}
\vspace{-4pt}
\end{table}

Note that we represented skipped words as having RTs of 0ms in our previous experiments on eye-tracking datasets. 
Thus, our previous results could be driven purely by word-skipping effects. 
We now run the same experiments as in \cref{sec:exp_entropy_vs_surprisal} and \cref{sec:skewed_expectations_analysis}, but with skipped words removed from our analysis.
These results are presented in \cref{tab:delta_llh_entropy_no_skipped}.
In short, when skipped words are not considered, the \renyi{} entropy is no more predictive of RTs than the surprisal.
In fact, the surprisal seems to be a slightly stronger predictor, albeit not significantly so in Dundee.
However, adding the \renyi{} entropy as a predictor to a model which already has surprisal still adds significant predictive power in Dundee.
In short, this table shows that, while partly driven by word skipping, there are still potentially other effects of anticipation on RTs.\looseness=-1

\begin{table*}
    \centering
\resizebox{\textwidth}{!}{%
    \begin{tabular}{lccccccccccccc}
\toprule
         &
         & \multicolumn{3}{c}{$\Delta$-budget} 
         & \multicolumn{3}{c}{Over-budget} 
         & \multicolumn{3}{c}{Under-budget} 
         & \multicolumn{3}{c}{$|\cdot|$-budget} 
         \\
         \cmidrule(lr){3-5}
         \cmidrule(lr){6-8}
         \cmidrule(lr){9-11}
         \cmidrule(lr){12-14}
         &
         & $\word_{t\!-\!3}$ & $\word_{t\!-\!2}$ & $\word_{t\!-\!1}$
         & $\word_{t\!-\!3}$ & $\word_{t\!-\!2}$ & $\word_{t\!-\!1}$
         & $\word_{t\!-\!3}$ & $\word_{t\!-\!2}$ & $\word_{t\!-\!1}$
         & $\word_{t\!-\!3}$ & $\word_{t\!-\!2}$ & $\word_{t\!-\!1}$
         \\
\midrule
\multicolumn{5}{l}{Shannon Entropy ($\alpha=1$)} \\
\cmidrule(lr){1-1}

\multicolumn{2}{l}{Brown} & {-0.03}\phantom{$^{***}$}& {-0.01}\phantom{$^{***}$}& \phantom{-}{0.02}\phantom{$^{***}$}& {-0.01}\phantom{$^{***}$}& {-0.01}\phantom{$^{***}$}& \phantom{-}{0.07}\phantom{$^{***}$}& {-0.02}\phantom{$^{***}$}& \textcolor{myred}{-0.01}$^{***}$& \textcolor{myred}{-0.02}$^{**}$\phantom{$^{*}$}& \phantom{-}{0.01}\phantom{$^{***}$}& {-0.01}\phantom{$^{***}$}& \phantom{-}{0.03}\phantom{$^{***}$} \\
\multicolumn{2}{l}{Natural Stories} & \phantom{-}{0.04}\phantom{$^{***}$}& \phantom{-}{0.00}\phantom{$^{***}$}& \phantom{-}{0.05}\phantom{$^{***}$}& {-0.01}\phantom{$^{***}$}& {-0.00}\phantom{$^{***}$}& \phantom{-}{0.02}\phantom{$^{***}$}& \phantom{-}{0.04}\phantom{$^{***}$}& {-0.00}\phantom{$^{***}$}& \phantom{-}{0.02}\phantom{$^{***}$}& {-0.01}\phantom{$^{***}$}& {-0.01}\phantom{$^{***}$}& \textcolor{myred}{-0.02}$^{*}$\phantom{$^{**}$} \\

Provo & (\cmark) & \textcolor{myred}{-0.04}$^{***}$& \phantom{-}{0.16}\phantom{$^{***}$}& {-0.04}\phantom{$^{***}$}& \textcolor{myred}{-0.03}$^{**}$\phantom{$^{*}$}& \phantom{-}{0.06}\phantom{$^{***}$}& {-0.03}\phantom{$^{***}$}& {-0.03}\phantom{$^{***}$}& \phantom{-}{0.07}\phantom{$^{***}$}& {-0.02}\phantom{$^{***}$}& {-0.01}\phantom{$^{***}$}& {-0.04}\phantom{$^{***}$}& {-0.01}\phantom{$^{***}$} \\
Dundee & (\cmark) & {-0.00}\phantom{$^{***}$}& \phantom{-}\textcolor{mygreen}{0.03}$^{**}$\phantom{$^{*}$}& \phantom{-}{0.01}\phantom{$^{***}$}& {-0.00}\phantom{$^{***}$}& \phantom{-}{0.02}\phantom{$^{***}$}& \phantom{-}{0.04}\phantom{$^{***}$}& {-0.00}\phantom{$^{***}$}& \phantom{-}{0.01}\phantom{$^{***}$}& {-0.00}\phantom{$^{***}$}& {-0.00}\phantom{$^{***}$}& {-0.00}\phantom{$^{***}$}& \phantom{-}\textcolor{mygreen}{0.03}$^{*}$\phantom{$^{**}$} \\

Provo &  (\xmark) & \textcolor{myred}{-0.02}$^{***}$& {-0.05}\phantom{$^{***}$}& \phantom{-}{0.07}\phantom{$^{***}$}& {-0.01}\phantom{$^{***}$}& \phantom{-}{0.05}\phantom{$^{***}$}& {-0.02}\phantom{$^{***}$}& {-0.04}\phantom{$^{***}$}& {-0.06}\phantom{$^{***}$}& \phantom{-}{0.10}\phantom{$^{***}$}& {-0.03}\phantom{$^{***}$}& \phantom{-}{0.04}\phantom{$^{***}$}& \phantom{-}{0.03}\phantom{$^{***}$} \\
Dundee &  (\xmark) & \textcolor{myred}{-0.00}$^{**}$\phantom{$^{*}$}& \phantom{-}{0.01}\phantom{$^{***}$}& \phantom{-}{0.00}\phantom{$^{***}$}& {-0.00}\phantom{$^{***}$}& \phantom{-}{0.01}\phantom{$^{***}$}& {-0.00}\phantom{$^{***}$}& {-0.00}\phantom{$^{***}$}& \phantom{-}{0.00}\phantom{$^{***}$}& \phantom{-}{0.02}\phantom{$^{***}$}& \phantom{-}{0.00}\phantom{$^{***}$}& {-0.00}\phantom{$^{***}$}& \phantom{-}{0.02}\phantom{$^{***}$} \\

\midrule
\multicolumn{5}{l}{Renyi Entropy ($\alpha=\sfrac{1}{2}$)} \\
\cmidrule(lr){1-1}

\multicolumn{2}{l}{Brown} & {-0.00}\phantom{$^{***}$}& {-0.00}\phantom{$^{***}$}& \phantom{-}{0.11}\phantom{$^{***}$}& \phantom{-}{0.01}\phantom{$^{***}$}& \phantom{-}{0.00}\phantom{$^{***}$}& \phantom{-}{0.14}\phantom{$^{***}$}& {-0.01}\phantom{$^{***}$}& \textcolor{myred}{-0.01}$^{*}$\phantom{$^{**}$}& {-0.02}\phantom{$^{***}$}& \phantom{-}{0.01}\phantom{$^{***}$}& \phantom{-}{0.00}\phantom{$^{***}$}& \phantom{-}{0.16}\phantom{$^{***}$} \\
\multicolumn{2}{l}{Natural Stories} & {-0.00}\phantom{$^{***}$}& {-0.01}\phantom{$^{***}$}& \phantom{-}\textcolor{mygreen}{0.21}$^{***}$& {-0.01}\phantom{$^{***}$}& \textcolor{myred}{-0.01}$^{*}$\phantom{$^{**}$}& \phantom{-}\textcolor{mygreen}{0.23}$^{***}$& \phantom{-}{0.00}\phantom{$^{***}$}& {-0.00}\phantom{$^{***}$}& {-0.01}\phantom{$^{***}$}& {-0.02}\phantom{$^{***}$}& \textcolor{myred}{-0.01}$^{*}$\phantom{$^{**}$}& \phantom{-}\textcolor{mygreen}{0.21}$^{**}$\phantom{$^{*}$} \\

Provo & (\cmark) & {-0.01}\phantom{$^{***}$}& \phantom{-}\textcolor{mygreen}{0.48}$^{*}$\phantom{$^{**}$}& {-0.03}\phantom{$^{***}$}& {-0.01}\phantom{$^{***}$}& \phantom{-}\textcolor{mygreen}{0.42}$^{*}$\phantom{$^{**}$}& {-0.02}\phantom{$^{***}$}& {-0.02}\phantom{$^{***}$}& \phantom{-}{0.05}\phantom{$^{***}$}& \textcolor{myred}{-0.04}$^{**}$\phantom{$^{*}$}& {-0.02}\phantom{$^{***}$}& \phantom{-}{0.26}\phantom{$^{***}$}& {-0.01}\phantom{$^{***}$} \\
Dundee & (\cmark) & {-0.00}\phantom{$^{***}$}& \phantom{-}\textcolor{mygreen}{0.08}$^{*}$\phantom{$^{**}$}& \phantom{-}\textcolor{mygreen}{0.09}$^{*}$\phantom{$^{**}$}& {-0.00}\phantom{$^{***}$}& \phantom{-}\textcolor{mygreen}{0.07}$^{*}$\phantom{$^{**}$}& \phantom{-}\textcolor{mygreen}{0.10}$^{**}$\phantom{$^{*}$}& \textcolor{myred}{-0.00}$^{***}$& \phantom{-}{0.01}\phantom{$^{***}$}& \textcolor{myred}{-0.00}$^{***}$& {-0.00}\phantom{$^{***}$}& \phantom{-}{0.06}\phantom{$^{***}$}& \phantom{-}\textcolor{mygreen}{0.10}$^{**}$\phantom{$^{*}$} \\

Provo & (\xmark) & {-0.01}\phantom{$^{***}$}& \phantom{-}{0.10}\phantom{$^{***}$}& \phantom{-}{0.04}\phantom{$^{***}$}& {-0.03}\phantom{$^{***}$}& \phantom{-}{0.15}\phantom{$^{***}$}& \phantom{-}{0.01}\phantom{$^{***}$}& \phantom{-}{0.03}\phantom{$^{***}$}& \textcolor{myred}{-0.03}$^{*}$\phantom{$^{**}$}& \phantom{-}{0.04}\phantom{$^{***}$}& {-0.05}\phantom{$^{***}$}& \phantom{-}{0.15}\phantom{$^{***}$}& {-0.01}\phantom{$^{***}$} \\
Dundee & (\xmark) & {-0.00}\phantom{$^{***}$}& \phantom{-}\textcolor{mygreen}{0.04}$^{*}$\phantom{$^{**}$}& \phantom{-}{0.00}\phantom{$^{***}$}& {-0.00}\phantom{$^{***}$}& \phantom{-}\textcolor{mygreen}{0.04}$^{*}$\phantom{$^{**}$}& {-0.00}\phantom{$^{***}$}& {-0.00}\phantom{$^{***}$}& {-0.00}\phantom{$^{***}$}& \phantom{-}{0.01}\phantom{$^{***}$}& \textcolor{myred}{-0.00}$^{***}$& \phantom{-}\textcolor{mygreen}{0.04}$^{*}$\phantom{$^{**}$}& {-0.00}\phantom{$^{***}$} \\

\bottomrule
\\[-10pt]
\multicolumn{6}{l}{
\small
$\bxbase_t = \bxcommon_t \oplus \bxsurp_t \oplus [\renyient]^\intercal$
}
    \end{tabular}
}
\vspace{-5pt}
    \caption{$\deltallh$ (in $10^{-2}$ nats) achieved when predicting RTs after adding budgeting effect predictors on top of a baseline with entropy and surprisal as predictors.}
    \label{tab:delta_llh_budget}
   \vspace{-5pt}
\end{table*}

\subsection{Experiment \#5: Budgeting Effects}

\newcommand{\relu}{\mathrm{r}}
\newcommand{\mathcomment}[1]{\textcolor{mygray}{\texttt{#1}}}

We now analyze budgeting effects.
If RTs are affected by the entropy through a budgeting mechanism, we may expect to see budgeting spillover effects when a reader under-budgets---i.e., when the entropy is smaller than a word's surprisal, causing less time to be allocated to the word than required for processing.
Here, we operationalize \defn{under-budgeting} as any positive difference between surprisal and entropy.
Similarly, we may expect \defn{over-budgeting} to lead to negative spillover-effects, since spending extra time in a word might allow the reader to start going through some of their processing debt (i.e., the still unprocessed spillover effects of that and of previous words). 
We operationalize several potential budgeting effects
as:\looseness=-1%
\begin{subequations}
\begin{align}
    \!& \surpfunc_{t\!-\!1}(\word_{t\!-\!1}) \!-\! \entfunc(\Word_{t\!-\!1}) 
    & \mathcomment{($\Delta$-budget)}  \!\! \\
    \!& \relu(\surpfunc_{t\!-\!1}(\word_{t\!-\!1}) \!-\! \entfunc(\Word_{t\!-\!1}))
    \!\!\!\!
    & \mathcomment{(under-budget)}  \!\! \\
    \!& \relu(\entfunc(\Word_{t\!-\!1}) \!-\! \surpfunc_{t\!-\!1}(\word_{t\!-\!1}))
    \!\!\!\!
    & \mathcomment{(over-budget)} \!\! \\
    \!& |\surpfunc_{t\!-\!1}(\word_{t\!-\!1}) \!-\! \entfunc(\Word_{t\!-\!1})|
    & \mathcomment{($|\cdot|$-budget)} \!\! 
\end{align} 
\end{subequations}
where $\relu(x) = \max(0, x)$.
We then compute the $\deltallh$ of adding these effects as predictors of RT on top of a baseline with the current word's entropy, as well as all four surprisal terms, as predictors
$\bxbase_t\!=\! \bxcommon\oplus\bxsurp\oplus[\renyient]^\intercal$.
Unlike previous experiments, thus, our baseline here already contains the entropy as a predictor.
Further, we show results for eye-tracking datasets both including (\cmark) and excluding (\xmark) skipped words for this and future analyses.

\cref{tab:delta_llh_budget} presents these results.
In short, we do find budgeting effects of word $t\!-\!1$ on RTs in our two analyzed self-paced reading datasets, 
and in Dundee (\cmark).
We do not, however, find them on Dundee (\xmark).
This may imply budgeting effects impact word skipping, but not actual RTs once the word is fixed.
Further, we also find weak budgeting effects of word $t\!-\!2$ in our (\xmark) eye-tracking datasets; these, however, are only significant in Dundee.
We conclude that these results do not provide concrete evidence of a budgeting mechanism influencing RTs, but only of it influencing word skipping instead.
We will further analyze these effects in our discussion section (\cref{sec:discussion}).

\subsection{Experiment \#6: Preemptive Processing} \label{sec:preemptive_processing}

In our analysis of preemptive processing, we will analyze the impact of successor entropy, i.e., $\renyientnext$, on RTs.
While prior work has analyzed this impact, the results in the literature are contradictory.
\cref{tab:delta_llh_successor} presents the results of our analysis.
In short, this table shows that the successor entropy is only significant in Natural Stories.\footnote{We note this is the same dataset previously analyzed by \citet{van-schijndel-linzen-2019-entropy}, who found a significant effect of the successor entropy.\looseness=-1}
In contrast, the current word's contextual entropy is a significant predictor of RTs in $\sfrac{3}{4}$ analyzed datasets, even when added to a model that already has the successor entropy.
Further, while most of our results suggest readers rely on skewed expectations for their anticipatory predictions---i.e., the \renyi{} entropy with $\alpha=\sfrac{1}{2}$ is in general a stronger predictor than Shannon's entropy---the successor Shannon entropy seems more predictive of RTs than the \renyi{}.
Our full model, though, still has a larger log-likelihood when using \renyi{} entropies.
Overall, our results support the findings of \citet{smith2008optimal}, which suggests preemptive processing costs are constant with respect to the successor entropy.
Thus, we conclude preemptive processing is likely not the main mechanism through which $\ent$ affects $\word_t$'s reading times.\looseness=-1

\begin{figure*}[t]
    \centering
    \begin{subfigure}[b]{.5\textwidth}
        \includegraphics[width=\textwidth]{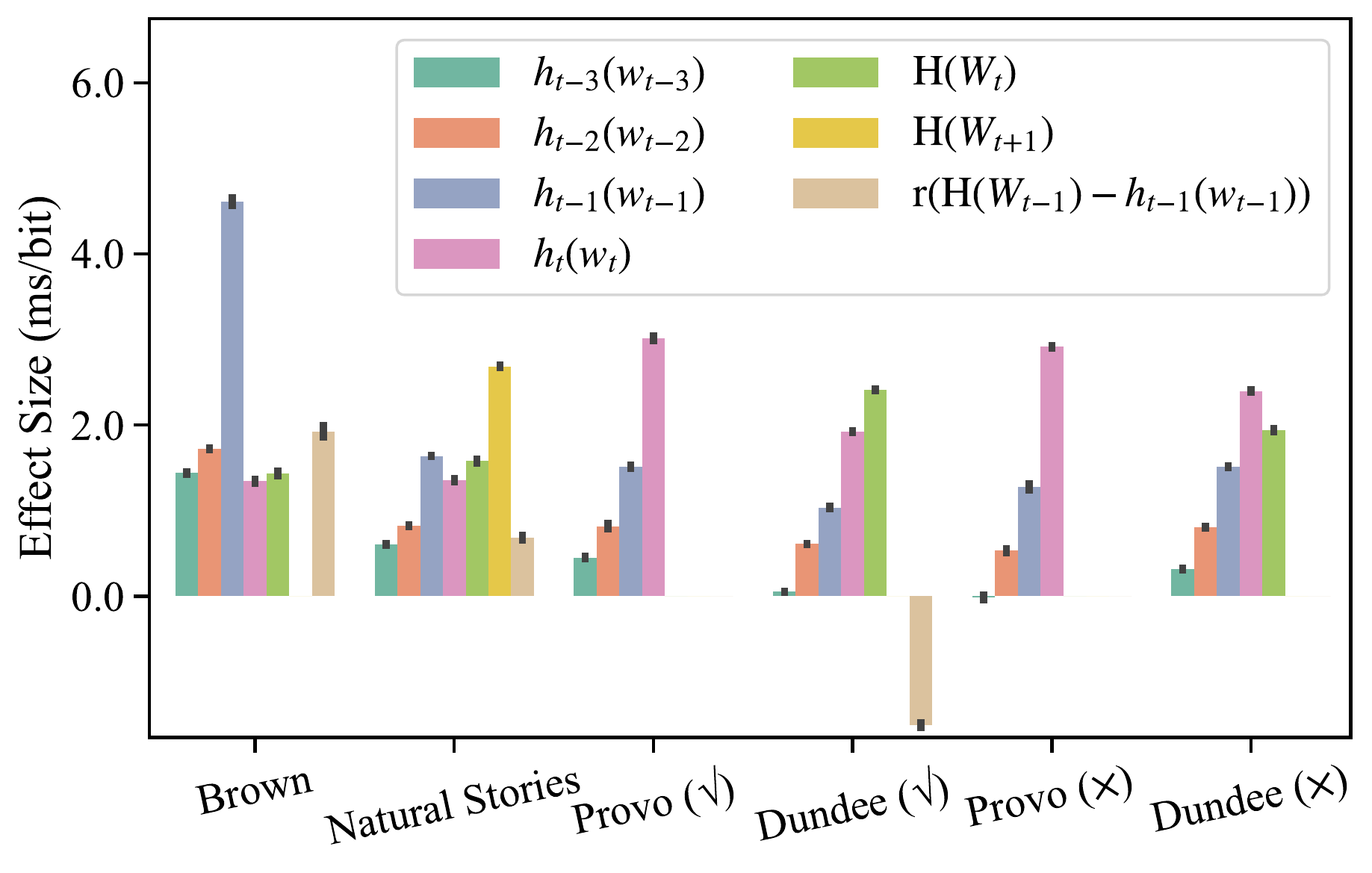}
    \end{subfigure}%
    ~
    \begin{subfigure}[b]{.5\textwidth}
        \includegraphics[width=\textwidth]{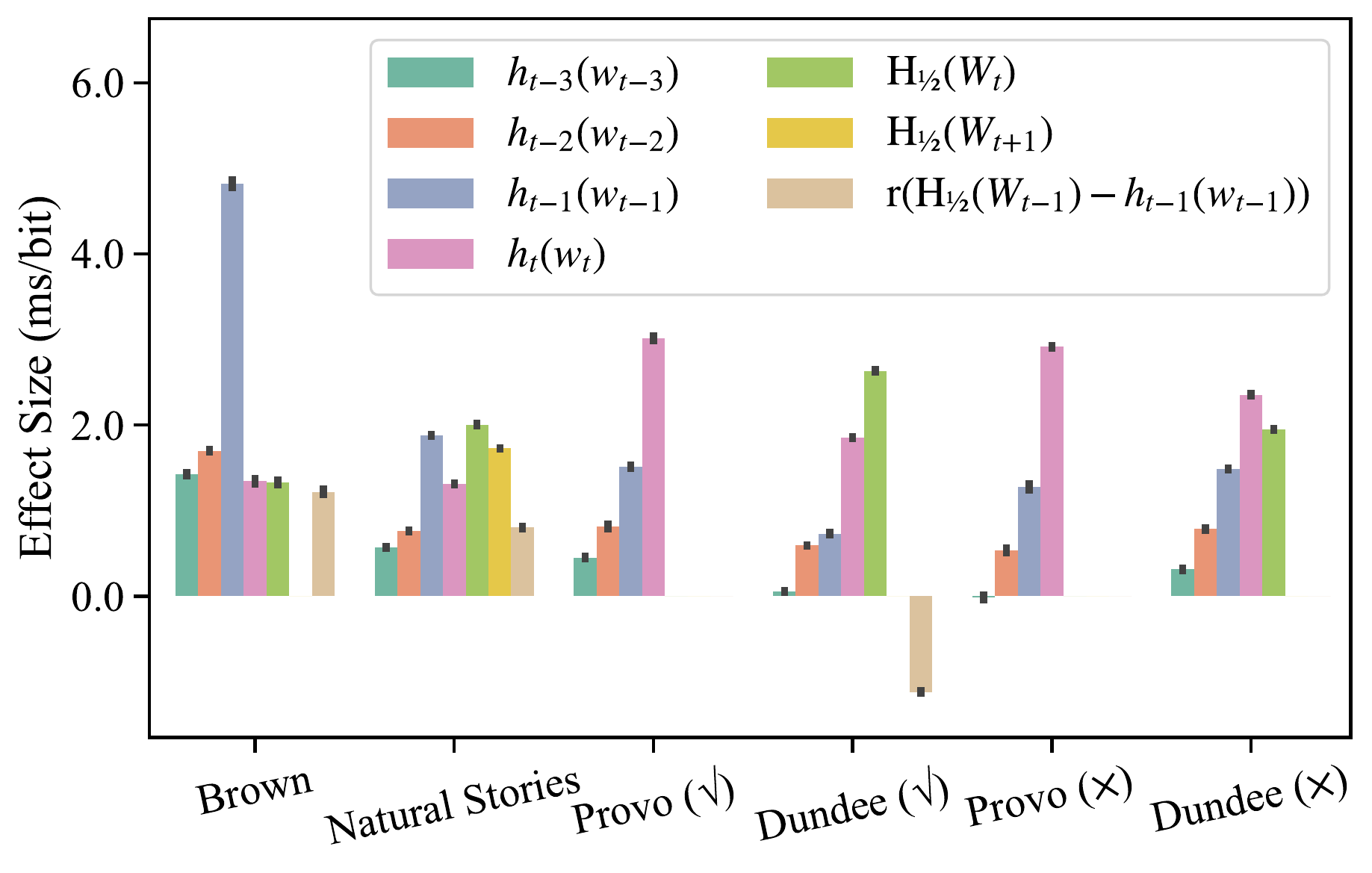}
    \end{subfigure}
    \vspace{-22pt}
    \caption{Size of the learned effects $\gamparams$ for both surprisal and contextual entropy terms when using our best performing models in each dataset: (left) Shannon entropy; (right) \renyi{} entropy with $\alpha=\sfrac{1}{2}$.
    Error bars represent the standard deviation of parameter estimates across the 10 cross-validation folds.
    }
    \label{fig:effect_sizes}
   \vspace{-5pt}
\end{figure*}

\section{Discussion} \label{sec:discussion}

We wrap-up our paper with an overall discussion of results. A key overall finding seen across \cref{tab:delta_llh_entropy,tab:delta_llh_entropy_renyi} is that effects of entropy (expected surprisal) are generally local, i.e., they are clearest and most pronounced on current-word RTs.
On the other hand, the effects of surprisal also show up on subsequent words, e.g., in spillover effects. 
This is consistent with our overall hypothesis that entropy effects capture \textbf{anticipatory} reading behavior.

\begin{table}
    \centering
\resizebox{\columnwidth}{!}{%
    \begin{tabular}{lccc@{\hskip 0in}cc}
\toprule 
         & 
         & \multicolumn{2}{c}{Entropy$^1$} 
         & \multicolumn{2}{c}{Successor Entropy$^2$} 
         \\
         \cmidrule(lr){3-4}
         \cmidrule(lr){5-6}
        &
        & $\emptyset$ $^3$ & $[\renyientnext]^4$
        & $\emptyset$ $^3$ & $[\renyient]^5$
         \\
\midrule
\multicolumn{5}{l}{Shannon Entropy ($\alpha=1$)} \\
\cmidrule(lr){1-1}

\multicolumn{2}{l}{Brown} & \phantom{-}\textcolor{mygreen}{0.15}$^{*}$\phantom{$^{**}$}& \phantom{-}\textcolor{mygreen}{0.14}$^{*}$\phantom{$^{**}$}& \phantom{-}{0.01}\phantom{$^{***}$}& {-0.01}\phantom{$^{***}$} \\
\multicolumn{2}{l}{Natural Stories\,\,\,\,} & \phantom{-}\textcolor{mygreen}{0.89}$^{***}$& \phantom{-}\textcolor{mygreen}{0.44}$^{*}$\phantom{$^{**}$}& \phantom{-}\textcolor{mygreen}{2.27}$^{***}$& \phantom{-}\textcolor{mygreen}{1.83}$^{***}$ \\
Provo & (\cmark) & {-0.06}\phantom{$^{***}$}& {-0.05}\phantom{$^{***}$}& {-0.06}\phantom{$^{***}$}& \textcolor{myred}{-0.06}$^{*}$\phantom{$^{**}$} \\
Dundee & (\cmark) & \phantom{-}\textcolor{mygreen}{0.25}$^{***}$& \phantom{-}\textcolor{mygreen}{0.26}$^{***}$& {-0.00}\phantom{$^{***}$}& {-0.00}\phantom{$^{***}$} \\
Provo & (\xmark) & {-0.01}\phantom{$^{***}$}& \phantom{-}{0.01}\phantom{$^{***}$}& \textcolor{myred}{-0.08}$^{*}$\phantom{$^{**}$}& {-0.06}\phantom{$^{***}$} \\
Dundee & (\xmark) & \phantom{-}\textcolor{mygreen}{0.17}$^{**}$\phantom{$^{*}$}& \phantom{-}\textcolor{mygreen}{0.16}$^{**}$\phantom{$^{*}$}& \phantom{-}{0.02}\phantom{$^{***}$}& \phantom{-}{0.00}\phantom{$^{***}$} \\

\midrule
\multicolumn{5}{l}{Renyi Entropy ($\alpha=\sfrac{1}{2}$)} \\
\cmidrule(lr){1-1}

\multicolumn{2}{l}{Brown} & \phantom{-}\textcolor{mygreen}{0.26}$^{*}$\phantom{$^{**}$}& \phantom{-}\textcolor{mygreen}{0.27}$^{*}$\phantom{$^{**}$}& {-0.01}\phantom{$^{***}$}& {-0.00}\phantom{$^{***}$} \\
\multicolumn{2}{l}{Natural Stories} & \phantom{-}\textcolor{mygreen}{2.04}$^{***}$& \phantom{-}\textcolor{mygreen}{1.52}$^{***}$& \phantom{-}\textcolor{mygreen}{1.95}$^{***}$& \phantom{-}\textcolor{mygreen}{1.44}$^{***}$ \\
Provo &(\cmark) & {-0.04}\phantom{$^{***}$}& {-0.01}\phantom{$^{***}$}& {-0.03}\phantom{$^{***}$}& {-0.00}\phantom{$^{***}$} \\
Dundee &(\cmark) & \phantom{-}\textcolor{mygreen}{0.64}$^{***}$& \phantom{-}\textcolor{mygreen}{0.64}$^{***}$& \phantom{-}{0.00}\phantom{$^{***}$}& {-0.00}\phantom{$^{***}$} \\
Provo &(\xmark) & \phantom{-}{0.32}\phantom{$^{***}$}& \phantom{-}{0.38}\phantom{$^{***}$}& \phantom{-}{0.06}\phantom{$^{***}$}& \phantom{-}{0.12}\phantom{$^{***}$} \\
Dundee &(\xmark) & \phantom{-}\textcolor{mygreen}{0.36}$^{***}$& \phantom{-}\textcolor{mygreen}{0.34}$^{***}$& \phantom{-}{0.03}\phantom{$^{***}$}& \phantom{-}{0.01}\phantom{$^{***}$} \\

\bottomrule
\\[-8pt]
\multicolumn{6}{l}{
$^1$
$\bxmodel_t = \bxbase_t \oplus [\renyient]^\intercal$, 
\,\,\,\,
$^2$
$\bxmodel_t = \bxbase_t \oplus [\renyientnext]^\intercal$, 
} \\ \multicolumn{6}{l}{
$^3$
$\bxbase_t = \bxcommon_t \oplus \bxsurp_t$,
\,\,\,\,
$^4$
$\bxbase_t = \bxcommon_t \oplus \bxsurp_t \oplus [\renyientnext]^\intercal$,
} \\ \multicolumn{6}{l}{
$^5$
$\bxbase_t = \bxcommon_t \oplus \bxsurp_t \oplus [\renyient]^\intercal$, 
}
    \end{tabular}
}
\vspace{-5pt}
    \caption{$\deltallh$ (in $10^{-2}$ nats) after adding the top predictor to a baseline with the predictors in the column.
    All models include surprisal as a predictor.\looseness=-1
    }
    \label{tab:delta_llh_successor}
   \vspace{-5pt}
\end{table}

To make this point more concrete, we plot the values of the parameters $\gamparams$ from our best regressor per dataset in \cref{fig:effect_sizes}---showing the effect of predictor variables not-included in a dataset as zero. 
As the contextual \renyi{} entropy models yield overall higher data log-likelihoods, we focus on them here.
\cref{fig:effect_sizes} shows that---for Brown, Natural Stories and Dundee---not only does the entropy have similar (or stronger) psychometric predictive power than the surprisal, it also has a similar (or stronger) \emph{effect size} on RTs.
In other words, an increase of 1 bit in contextual entropy leads to a similar or larger increase in RTs than a 1 bit increase in surprisal.

\cref{fig:effect_sizes} also shows that in Natural Stories---the only dataset where it is significant---the successor entropy has a larger effect on RTs than the surprisal, and its impact is positive.
This suggests an increase in the next word's entropy may lead to an increase in the current word's RT.
In turn, this could imply that readers preemptively process future words, and that they need more time to do this when there are more plausible future alternatives.
Moreover, we see the successor \renyi{} entropy has a similar (or slightly smaller) effect on RTs than the current word's \renyi{} entropy.
Why the successor entropy is only significant in the Natural Stories dataset is left as an open question.\looseness=-1

\cref{fig:effect_sizes} further shows the effect of over-budgeting on RTs in Brown, Natural Stories, and Dundee.\footnote{While over-budgeting is not a significant predictor in Brown, it leads to slightly stronger models and we add it to this dataset's regressor for an improved comparison.\looseness=-1}
We see that our operationalization of over-budgeting leads to a negative effect on RTs in Dundee (\cmark), but to no effect in Dundee (\xmark).
Together, these results suggest that when a reader over-budgets time for a word, they are more likely to skip the following one.
In Brown and Natural Stories, however, over-budgeting seems to lead to a positive effect on the next word's RT.
As this is only the case in self-paced reading datasets, we suspect this could be related to specific properties of this experimental setting, e.g., a reader's attention could break when they become idle due to over-budgeting RT for a specific word.\looseness=-1

Finally, while we get roughly consistent effect sizes for all predictors in Brown, Natural Stories and Dundee, results are different for Provo.
While we note that Provo is the smallest of our analyzed datasets (in terms of its number of annotated word tokens; see \Cref{tab:data} in \cref{app:data}), this is likely not the whole story behind these different results.
As it is non-trivial to diagnose the source of these differences, we leave this task open for future work.

\section{Limitations and Caveats}

Throughout this paper, we have discussed the effect of anticipation on RTs (and on the reading process, more generally)---where we quantify a reader's anticipation as a contextual entropy.
We do not, however, have access to the true distribution $p$, which is necessary to compute this entropy.
Rather, we rely on a language model $\ptheta$ to approximate it.
How this approximation impacts our results is a non-trivial question---especially since we do not know which errors our estimator is likely to commit.
If we assume $\ptheta$ to be equivalent to $p$ up to the addition of white-noise to its logits,\footnote{I.e., $\ptheta(\cdot \mid \prevwords) \!\propto\! p(\cdot \mid \prevwords)\, e^{\mathcal{N}(0; \sigma^2)}$, where $\mathcal{N}(0; \sigma^2)$ is a normally distributed, $0$-mean noise with variance $\sigma^2$.} for instance, we could have good estimates of the entropy (as the noise would be partially averaged out), while not-as-good estimates of the surprisal (since surprisal estimates would be affected by the entire noise in $\ptheta(\word_t \mid \prevwords)$ estimates).\footnote{This can be made clearer if discussed in terms of the mean squared error of the surprisal and entropy estimates.
The mean squared error of an estimator equals its squared bias plus its variance.
Since contextual entropy is simply the average across surprisals, we should expect the bias term induced by the addition of white noise to be the same in our estimates of both entropy and surprisal. However, the variance term would be larger for surprisals. 
This could bias our analyses towards preferring the entropy as a predictor.}\looseness=-1

We believe this not to be the main reason behind our results for two reasons.
First, if the entropy helped predict RTs simply because we have noisy versions of the surprisal in our estimates, the same should be true for predicting spillover effects, which are also predictable from surprisals. 
This is not the case, however: While the entropy, i.e., $\ent$, helps predict RTs, spillover entropies, e.g., $\entprev$, do not.
Second, even if our estimates are noisy, assuming that this noise is not unreasonably large, a noisy estimate of the surprisal should better approximate the true surprisal than an estimate of the contextual entropy.
Since replacing the surprisal with the contextual entropy eventually leads to better predictions of RTs, this is likely not the only mechanism on which our results rely.\looseness=-1

Another limitation of our work is that we always estimate the contextual entropy and surprisal of a word $\word_t$ while considering its entire context $\prevwords$.
Modeling surprisal and entropy effects while considering skipped words, however, would be an important future step for an analysis of anticipation in reading.
As an example, \citet{van-schijndel-schuler-2016-addressing} show that when a word $\word_{t\!-\!1}$ is skipped, the subsequent word $\word_t$'s RT is not only proportional to its own surprisal (i.e., $\surp$), but to the sum of both these words surprisals (i.e., to $\surp+\surpprev$).
They justify this by arguing that a reader would need to incorporate both words' information at once when reading.
Another model of the reading process, however, could predict that readers simply marginalize over the word in the $(t\!-\!1)^{\text{th}}$ position, and compute the surprisal of word $\word_t$ directly as:\looseness=-1%
\begin{align}
&\log p(\word_{t} \mid \words_{<t\!-\!1}) = \\
&\qquad \log \sum_{\word \in \vocabnoeos} p(\word_t \mid  \words_{<t\!-\!1} \circ \word)\, p(\word \mid \words_{<t\!-\!1}) \nonumber
\end{align}
We leave it to future work to disentangle the effects that using a model $\ptheta$---as well as the effects of skipped words---has on our results.

\section{Conclusion}

This work investigates the anticipatory nature of the reading process. 
We examine the relationship between expected information content---as quantified by contextual entropy---and RTs in four naturalistic datasets, specifically looking at the additional predictive power over surprisal that this quantity provides.
While our results confirm the responsive nature of reading, they also highlight its anticipatory nature.
We observe that contextual entropy has significant predictive power for reading behavior---most reliably on current-word processing---which gives us evidence of a non-trivial anticipatory component to reading.

\appendix

\section{\citeposs{smith2008optimal} Constant Preemptive Processing Effort} \label{app:constant_preprocessing_time}

\newcommand{\preprocessfunc}{\mathrm{pe}}

\begin{proposition}
Assume that the reading times and preprocessing effort (PE) are allocated as follows
\begin{subequations}
\begin{align}
\timefunc(\word \mid \prevwords) &= \frac{\surpfunc_t(\word)}{\log_2 k} \label{eq:reading_time_sl}
    \\
    \preprocessfunc(\word \mid \prevwords) &\propto k^{-\timefunc(\word \mid \prevwords})
     \label{eq:preprocessing_effort_sl}
\end{align}
\end{subequations}
where $\timefunc$ represents reading times here, and $k > 1$ is a free parameter.
Then, the total effort to preprocess all words in the vocabulary, i.e., $\sum_{\word \in \vocab} \preprocessfunc(\word \mid \prevwords)$, is constant.
\end{proposition}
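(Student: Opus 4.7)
The plan is to substitute the definitions directly and show that the quantity $k^{-y(w \mid \prevwords)}$ collapses to $p(w \mid \prevwords)$, after which the sum over the vocabulary is immediate.

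First, I would unfold the preprocessing effort by combining \cref{eq:reading_time_sl,eq:preprocessing_effort_sl}. Writing the proportionality constant as $C > 0$, we have
\begin{equation}
    \preprocessfunc(w \mid \prevwords) = C \cdot k^{-y(w \mid \prevwords)} = C \cdot k^{-\surpfunc_t(w) / \log_2 k}.
\end{equation}
Then I would simplify the exponent using the change-of-base identity $\log_2 x / \log_2 k = \log_k x$ together with $\surpfunc_t(w) = -\log_2 p(w \mid \prevwords)$, giving
\begin{equation}
    k^{-\surpfunc_t(w)/\log_2 k} = k^{\log_2 p(w \mid \prevwords) / \log_2 k} = k^{\log_k p(w \mid \prevwords)} = p(w \mid \prevwords).
\end{equation}
Thus $\preprocessfunc(w \mid \prevwords) = C \cdot p(w \mid \prevwords)$.

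Summing over $w \in \vocab$ and using $\sum_{w \in \vocab} p(w \mid \prevwords) = 1$ (which holds because $\vocab$ is augmented with $\eos$ to make the distribution sum to one), I would conclude
\begin{equation}
    \sum_{w \in \vocab} \preprocessfunc(w \mid \prevwords) = C \sum_{w \in \vocab} p(w \mid \prevwords) = C,
\end{equation}
which is a constant independent of the context $\prevwords$.

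There is no real obstacle here; the result is essentially a one-line manipulation of the logarithms hidden inside the definitions. The only subtle point worth flagging in the write-up is that the equality $\sum_{w \in \vocab} p(w \mid \prevwords) = 1$ relies on summing over the \emph{entire} vocabulary (including \eos), which matches the paper's convention, so the conclusion is the unconditional constancy of the total preemptive processing effort.
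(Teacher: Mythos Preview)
Your proposal is correct and follows essentially the same approach as the paper: substitute \cref{eq:reading_time_sl} into \cref{eq:preprocessing_effort_sl}, use the change-of-base identity to collapse $k^{-\surpfunc_t(w)/\log_2 k}$ to $p(w \mid \prevwords)$, and sum over $\vocab$ to obtain a constant. The only cosmetic difference is that you name the proportionality constant $C$ explicitly, whereas the paper leaves the argument at ``proportional to $1$.''
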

\begin{proof}
By plugging in \cref{eq:reading_time_sl} into \cref{eq:preprocessing_effort_sl}, and summing over the vocabulary, we find preprocessing costs should be proportional to constant.
We show the manipulations below:
\begin{subequations}
\begin{align}
    \sum_{\word \in \vocab} &\preprocessfunc(\word \mid \prevwords)
    \propto \sum_{\word \in \vocab}\! k^{-\frac{\surpfunc_t(\word)}{\log_2 k}}  \\
    &= \sum_{\word \in \vocab}\! k^{\log_k p(\word \mid \words_{< t})}  \\
    &=\  \sum_{\word \in \vocab}\! p(\word \mid \prevwords) = 1  
\end{align}
\end{subequations}
This proves the result.
\end{proof}
\newcommand{\costfunc}{\mathrm{cost}}

\section{A Subword Bound on the \renyi{} Entropy} \label{app:lowerbound}

\newcommand{\vocabsubword}{\mathcal{S}}
\newcommand{\vocabsubwordeos}{\overline{\mathcal{S}}}
\newcommand{\bs}{\boldsymbol{s}}

\newcommand{\subword}{s}
\newcommand{\vocabsingles}{\vocab_{\subword}}
\newcommand{\wordinductive}{\widehat{\subword}}

\begin{theorem}
Let $p$ be a language model with vocabulary $\vocabsubword$, an alphabet of subwords.
Let $\vocabnoeos \subseteq \vocabsubword^*$ be the set of words constructable with subwords drawn from $\vocabsubword$.
Further, assume that, for every $\word \in \vocabnoeos$, there exists a \emph{unique} tokenization of $\word$ into subwords $s_1, \ldots, s_T \in \vocabsubword$ whose concatenation is $\word$, i.e., $\word = s_1 \cdots s_T$.
Due to this uniqueness assumption, we may regard $p$ as either a distribution over $\vocabsubword^*$ or $\vocabnoeos^*$.
Then, for all $\alpha \in \R_{>0}$, we have\looseness=-1
\begin{align}
&\renyientfunc(W_t \mid \Words_{<t} = \prevwords) \\
&\qquad\qquad\quad\geq \renyientfunc(S_t \mid \Words_{<t} = \prevwords) \nonumber
\end{align}
where $S_t$ is an $\overline{\vocabsubword}$-valued random variable that takes on the value of the first subword of the word in $t^{\text{th}}$ position, and $\vocabsubwordeos \defeq \vocabsubword \cup \{\eos\}$.\looseness=-1
\end{theorem}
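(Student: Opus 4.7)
The plan is to view $S_t$ as a deterministic, many-to-one function of $W_t$, and then show that this ``coarsening'' can only decrease the R\'enyi entropy. Concretely, fix a context $\prevwords$ and write $p(\word)$ for $p(\word \mid \prevwords)$. By the uniqueness-of-tokenization hypothesis, each $\word \in \vocab$ has a well-defined first subword, which I will denote $\phi(\word) \in \vocabsubwordeos$. Partitioning $\vocab$ by the fibers of $\phi$, the marginal distribution of $S_t$ satisfies
\begin{equation*}
    p(s) \;=\; \sum_{\word \in \phi^{-1}(s)} p(\word), \qquad s \in \vocabsubwordeos.
\end{equation*}
So the reduction to a grouping inequality on the $\alpha$-th power sums $\sum_{\word} p(\word)^\alpha$ and $\sum_{s} p(s)^\alpha$ is immediate.

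The core calculation is a case split on the sign of $1 - \alpha$. For $\alpha > 1$, the map $x \mapsto x^\alpha$ is superadditive on $\R_{\geq 0}$, so within each fiber $\phi^{-1}(s)$ we have $\sum_{\word \in \phi^{-1}(s)} p(\word)^\alpha \leq \bigl(\sum_{\word \in \phi^{-1}(s)} p(\word)\bigr)^\alpha = p(s)^\alpha$. Summing over $s$ gives $\sum_{\word} p(\word)^\alpha \leq \sum_{s} p(s)^\alpha$. For $0 < \alpha < 1$, the same function is subadditive, and the inequality flips to $\sum_{\word} p(\word)^\alpha \geq \sum_{s} p(s)^\alpha$.

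To conclude, I apply $\frac{1}{1-\alpha} \log_2(\cdot)$ to both sides. In the regime $\alpha > 1$, the prefactor $\tfrac{1}{1-\alpha}$ is negative, which reverses the inequality and yields $\renyientfunc(W_t) \geq \renyientfunc(S_t)$. In the regime $0 < \alpha < 1$, the prefactor is positive and the inequality is preserved, again giving $\renyientfunc(W_t) \geq \renyientfunc(S_t)$. The boundary case $\alpha = 1$ can be handled either by taking a limit (via L'H\^opital's rule, as already invoked in the main text) or directly by the Shannon chain rule $\entfunc(W_t) = \entfunc(S_t) + \entfunc(W_t \mid S_t) \geq \entfunc(S_t)$, since $W_t$ determines $S_t$.

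The main obstacle is purely bookkeeping: keeping track of how the sign of $1 - \alpha$ interacts with the direction of the grouping inequality, and being explicit that everything is conditional on $\Words_{<t} = \prevwords$ (so the argument is really carried out inside that conditional probability measure). The uniqueness-of-tokenization hypothesis is used exactly once, to guarantee that $\phi$ is a well-defined function on $\vocab$ so that the fiber decomposition $\vocab = \bigsqcup_{s} \phi^{-1}(s)$ partitions the probability mass exactly; without it, one could double-count words that admit multiple tokenizations and the marginalization identity for $p(s)$ would fail.
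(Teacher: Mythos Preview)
Your proposal is correct and follows essentially the same approach as the paper: partition $\vocab$ by first subword, use sub/superadditivity of $x \mapsto x^\alpha$ on each fiber together with the sign of $\tfrac{1}{1-\alpha}$ for $\alpha \neq 1$, and handle $\alpha = 1$ via the Shannon chain rule with $\entfunc(S_t \mid W_t) = 0$. The only cosmetic difference is your explicit function $\phi$ and fiber notation in place of the paper's indexed partition $\vocab = \bigcup_s \vocab_s$.
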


\begin{proof}
Under the assumption that there exists a unique tokenization of each word $\word \in \vocabnoeos$, we can partition the vocabulary $\vocab$ as follows: $\vocab = \cup_{s \in \overline{\vocabsubword}} \vocabsingles$ where $\vocabsingles$ is the set of words $\word$ which start with subword $s$. 
This allows us to rewrite the R{\'e}nyi entropy as follows:
\begin{subequations}
\begin{align}
&\renyientfunc(W_t \mid \Words_{<t} = \prevwords) \\
    &\, =\frac{1}{1 - \alpha} \log \sum_{w \in \vocab}p(w \mid \prevwords)^\alpha\\
    &\, =\frac{1}{1 - \alpha} \log \sum_{s \in \vocabsubword} \sum_{w \in \vocabsingles}  p(w \mid \prevwords)^\alpha \label{eq:before-inequality} \\
    &\, \geq \frac{1}{1 - \alpha} \log \sum_{s \in \vocabsubword} \left(\sum_{w \in \vocabsingles}  p(w \mid \prevwords)\right)^\alpha \label{eq:after-inequality} \\
    &\, = \frac{1}{1 - \alpha} \log \sum_{s \in \vocabsubword} p(s \mid \prevwords)^\alpha \\
    &\, = \renyientfunc(S_t \mid \Words_{<t} = \prevwords)
\end{align}
\end{subequations}
where we use the fact that $p(s \mid \prevwords) = \sum_{w \in \vocabsingles} p(w \mid \prevwords)$.
The step from \cref{eq:before-inequality} to \cref{eq:after-inequality} holds for $\alpha \in \mathbb{R}_{>0} \setminus \{1\}$ because, for $\alpha > 1$, we have that $\frac{1}{1-\alpha} <0$ and $x^\alpha$ is superadditive for $x \geq 0$.
Similarly, for $0 < \alpha < 1$, we have that $\frac{1}{1-\alpha} > 0$ and $x^\alpha$ is subadditive for $x \geq 0$.\footnote{Superadditivity means $f(a) + f(b) \leq f(a+b)$, while subadditivity means $f(a) + f(b) \geq f(a+b)$. See \url{https://math.stackexchange.com/questions/3736657/proof-of-xp-sub-super-additive} for several simple proofs.}
Finally, for the case that $\alpha = 1$, we can apply 
the chain rule of entropy to write the joint entropy of $W_t$ and $S_t$ in two different ways:%
\begin{subequations}
\begin{align}
    &\entfunc_{1}(W_t, S_t \mid \Words_{<t} = \prevwords) \\ 
    &\qquad\quad=\entfunc_{1}(W_t \mid \Words_{<t} = \prevwords) \\
    &\qquad\quad\qquad\qquad+\underbrace{\entfunc_{1}(S_t \mid W_t, \Words_{<t} = \prevwords)}_{=0} \nonumber\\
   &\qquad\quad=\entfunc_{1}(S_t \mid \Words_{<t} = \prevwords) \\
   &\qquad\quad\qquad\qquad+  \underbrace{\entfunc_{1}(W_t \mid S_t, \Words_{<t} = \prevwords)}_{\geq 0} \nonumber
\end{align}
\end{subequations}
where $\entfunc_{1}(S_t \mid W_t, \Words_{<t} = \prevwords)=0$ because $S_t$ is deterministically derived from $W_t$. 
This implies
\begin{align}
    &\entfunc_{1}(W_t \mid \Words_{<t} = \prevwords) \\ 
     &\qquad\qquad\quad\geq \entfunc_{1}(S_t \mid \Words_{<t} = \prevwords) \nonumber 
\end{align}
This completes the proof for $\alpha \in \mathbb{R}_{> 0}$.\looseness=-1
\end{proof}

\begin{table}
    \centering
\resizebox{\columnwidth}{!}{%
    \begin{tabular}{lrrrr}
        \toprule
        Dataset & \# RTs & \# Words & \# Texts & \# Readers \\
        \midrule
        Brown & 136{,}907 & 6{,}907 & 13 & 35 \\
        Natural Stories & 848{,}767 & 9{,}325 & 10 & 180 \\
        Provo (\cmark) & 225{,}624 & 2{,}422 & 55 & 84 \\
        Dundee (\cmark) & 463{,}236 & 48{,}404 & 20 & 9 \\
        Provo (\xmark) & 125{,}884 & 2{,}422 & 55 & 84 \\
        Dundee (\xmark) & 246{,}031 & 46{,}583 & 20 & 9 \\
        \bottomrule
    \end{tabular}
    }
    \vspace{-5pt}
    \caption{Dataset statistics. \# RTs represents the total number of RT measurements, while \# Words is the number of RTs after averaging across readers.\looseness=-1}
    \label{tab:data}
\end{table}

\section{Datasets }\label{app:data}

Unless otherwise stated, we follow the data preprocessing steps (including cleaning and tokenization) performed by \citet{meister-etal-2021-revisiting}.
We use the following corpora in our experiments:

\newcommand{\datasetparagraphsvspaces}{-3pt}

\paragraph{Brown Corpus.} 
This corpus, first presented in \cite{smith2013-log-reading-time}, consists of moving-window self-paced RTs of selections from the Brown corpus of American English.
The subjects were 35 UCSD undergraduate native English speakers, each reading short (292–902 word) passages.
Comprehension questions were asked after reading, and participants were excluded if their performance was at chance.

\paragraph{Natural Stories.} This corpus is based on 10 stories constructed to have unlikely, but still grammatically correct, sentences.
As it includes psychometric data on sentences with rare constructions, this corpus gives us a broader understanding of how different sentences are processed.
Self-paced RTs on these texts was collected from 180 native English speakers.
We use this dataset's 2021 version, with fixes released by the authors.

\paragraph{Provo Corpus.}
This dataset consists of 55 paragraphs of English text from various sources and genres.
A high-resolution eye tracker (1000 Hz) was used to collect eye movement data while reading from 84 native speakers of American English. 

\paragraph{Dundee Corpus.} We employ this corpus' English portion, which contains eye-tracking recordings (1000 Hz) of 10 native English-speakers.
We drop all measurements from one of these readers (with ID \texttt{sg}), due to them not displaying any surprisal effects as reported by \citet{smith2013-log-reading-time}.
Each participant reads 20 newspaper articles from \emph{The Independent}, with a total of 2,377 sentences.

\begin{figure}
    \centering
    \includegraphics[width=\columnwidth]{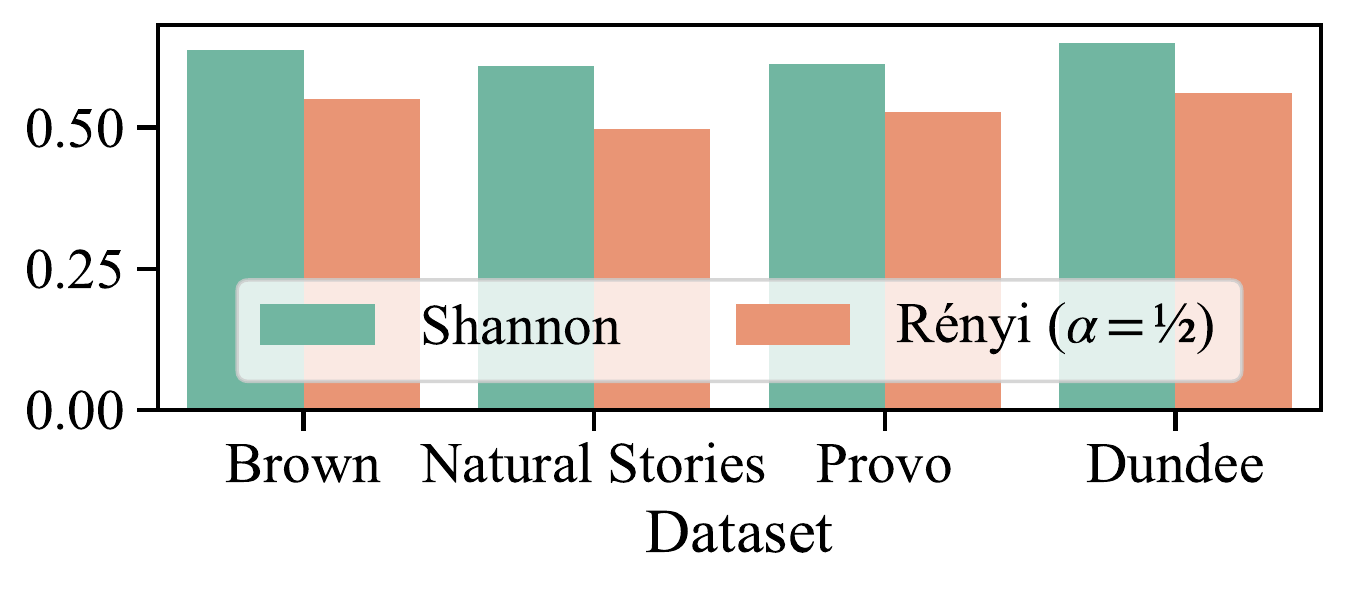}
    \vspace{-25pt}
    \caption{Spearman correlation between the surprisal and contextual entropies.}
    \label{fig:correlations}
    \vspace{-5pt}
\end{figure}

\section{Surprisal vs. Entropy}

Surprisal and contextual entropy are bound to be strongly related, as one is the other's expected value.
To see the extent of their relation, we compute their Spearman correlation per dataset and display it in \cref{fig:correlations}.
This figure shows that these values are indeed strongly correlated, and that Shannon's entropy is more strongly correlated to the surprisal than the \renyi{} entropy with $\alpha\!=\!\sfrac{1}{2}$.
Given that the \renyi{} entropy is in general a stronger predictor of RTs than the Shannon entropy, this finding provides further evidence that our results do not only rely on the entropy ``averaging out'' the noise in our surprisal's estimates.

\section*{Acknowledgments}

We thank Simone Teufel for conversations in early stages of this project.
We also thank our action editor Ehud Reiter, and the reviewers for their detailed feedback on this paper.
Tiago was supported by a Facebook PhD Fellowship.
Clara was supported by the Google PhD Fellowship.
Ethan was supported by an ETH Z{\"u}rich Postdoctoral Fellowship. 
Roger was supported by NSF grant BCS-2121074 and a Newton Brain Science Award.\looseness=-1

\section*{Ethical Considerations}

The authors foresee no ethical concerns with the research presented in this paper.

\bibliography{custom}
\bibliographystyle{acl_natbib}

\end{document}